\documentclass[letterpaper,journal]{IEEEtran}
\usepackage{cite}
\usepackage{amsfonts,latexsym,amssymb}
\usepackage{amsmath,amsthm}
\usepackage{graphicx}
\usepackage[linesnumbered,ruled,vlined]{algorithm2e}
\SetAlgoNlRelativeSize{-2}
\SetAlCapSkip{0.5em}
\usepackage{graphics}
\usepackage{enumerate}
\usepackage{epsfig}
\usepackage{cases}
\usepackage{color}
\usepackage{url}
\usepackage{bm}
\usepackage{dsfont}
\usepackage{multirow}
\usepackage{booktabs}
\usepackage{flushend}
\usepackage{graphicx}
\usepackage{subcaption}
\usepackage{caption} 
\usepackage{mathrsfs}
\usepackage{mathtools}
\usepackage{makecell}
\usepackage{amsthm}
\usepackage[hidelinks]{hyperref}
\usepackage[colorinlistoftodos,prependcaption,textsize=tiny]{todonotes}

\theoremstyle{plain}
\newtheorem{theorem}{Theorem}
\newtheorem{lemma}{Lemma}

\newtheorem{proposition}{Proposition}

\theoremstyle{definition}

\theoremstyle{remark}

\def\BibTeX{{\rm B\kern-.05em{\sc i\kern-.025em b}\kern-.08em
    T\kern-.1667em\lower.7ex\hbox{E}\kern-.125emX}}

\IEEEoverridecommandlockouts
\begin{document}

\setlength{\abovedisplayskip}{6pt plus 2pt minus 2pt}
\setlength{\belowdisplayskip}{6pt plus 2pt minus 2pt}
\setlength{\abovedisplayshortskip}{3pt plus 1pt minus 1pt}
\setlength{\belowdisplayshortskip}{3pt plus 1pt minus 1pt}

\title{\huge 
End-to-End Latency-Minimizing and Load-Balanced Request Scheduling for Edge LLM Inference in Agentic AI Services
}

\author{Zhen Li,
        Jun Cai,~\textit{Senior Member, IEEE},
        Haoran Gao,
        An Li,
        and
        Tan Li
        
\IEEEcompsocitemizethanks{
\IEEEcompsocthanksitem Zhen Li, Jun Cai (corresponding author), An Li, and Haoran Gao are with the Department of Electrical and Computer Engineering, Concordia University, Montreal, QC, H3G 1M8, Canada. (E-mail: \{zhen.li, jun.cai\}@concordia.ca, \{haoran.gao, an.li\}@mail.concordia.ca).
\IEEEcompsocthanksitem Tan Li is with the Department of Computer Science, Hang Seng University of Hong Kong, Hong Kong SAR. (E-mail: tanli@hsu.edu.hk).
}}


\IEEEtitleabstractindextext{%
\begin{abstract}
Large language model (LLM)-powered agentic AI services increasingly demand low-latency inference, motivating the deployment of LLMs across distributed edge servers. However, heterogeneous communication and computing capabilities, together with dynamically evolving inference states, make the edge server selection for each incoming request time-varying and tightly coupled across slots. In this paper, we investigate an online request scheduling framework for edge LLM inference that jointly minimizes long-term average end-to-end latency and regulates workload distribution across heterogeneous edge servers. Two main challenges arise in this context. First, conventional latency models cannot accurately capture the fine-grained dynamics of multi-stage LLM execution. Second, the latency consequence of a scheduling decision is observed only after request completion, making immediate decision evaluation difficult. To address these challenges, we develop a cross-slot inference model that captures transmission, prefill, iteration-level decoding, and key-value (KV) cache evolution for each diverse request, and characterize server workload through a KV cache memory-time consumption metric. We propose the LYREO approach that transforms the long-term load-balancing constraint via Lyapunov optimization and employs reward redistribution with sequence-based return prediction to convert delayed outcomes into timely learning signals for earlier decisions. Simulations under various configurations demonstrate that LYREO consistently achieves lower latency and more balanced load distribution than representative learning-based and heuristic baseline schemes.

\end{abstract}

\begin{IEEEkeywords} Agentic AI, LLM inference, request scheduling, load balancing, reward redistribution.
\end{IEEEkeywords} }
\maketitle

\IEEEdisplaynontitleabstractindextext

\section{Introduction}
\IEEEPARstart{A}{gentic} artificial intelligence (agentic AI) is emerging as a key paradigm for intelligent networks, where autonomous applications continuously perceive environments, reason, and act with limited human intervention~\cite{jiang26jsac}. As the core intelligence component of agentic AI systems, large language models (LLMs) provide the reasoning and planning capabilities required for goal-directed autonomy.
Unlike conventional single-invocation applications, agentic AI repeatedly interacts with LLMs through observe-think-act cycles, so each invocation sits on the closed loop's critical path.
A delayed response directly postpones the next action, making inference latency a key determinant of system responsiveness~\cite{zhao26tccn2}.
Typically, LLM inference is hosted on remote cloud servers, 
but repeatedly forwarding inference requests to the cloud incurs substantial wide-area transmission delay, making agentic services vulnerable to network congestion and unstable connectivity. 
Edge deployment of LLMs
physically shortens transmission paths, mitigating cloud-side queuing and contention~\cite{zhang26comst}.
However, when multiple agentic AI applications share edge-hosted LLMs, their requests inevitably overlap in time and vary widely in input/output lengths and latency expectations, competing for edge servers' heterogeneous communication, computation, and memory resources.
Since the end-to-end latency is shaped by fluctuating wireless conditions, disparate server capacities and states, and request-specific characteristics, edge inference scheduling must be request-dependent, state-aware, and time-varying, rather than static and one-shot.

Although recent studies advanced edge LLM inference through efficient batching, task offloading, and resource allocation~\cite{zheng26tmc, he24tmc, huang25iotj}, critical issues closely tied to practical edge LLM inference remain insufficiently explored.
On one hand, unlike conventional edge-computing requests abstracted as atomic tasks completed within a single scheduling slot, LLM inference requests may remain active across multiple slots. During this period, subsequent requests continue to be scheduled and arrive, causing their service processes to overlap and interact through shared edge server resources.
For example, an edge server supporting multiple agentic-AI-driven mobile robots may receive a new reasoning request from one robot while still generating a response for another. Admitting the new request changes the allocation of shared inference resources, thereby affecting the completion time of ongoing inference.
Such interactions persist across scheduling slots, contrasting with conventional scheduling paradigms~\cite{li25arxiv, li25lcn, mek25icc}, where decision outcomes are immediately observed to guide the decision in the next time slot.
Therefore, LLM scheduling exhibits cross-request and cross-slot effects that most studies fail to capture: assigning a new request without accounting for unfinished ones prolongs latency, but this impact becomes observable only after the affected requests complete several slots later. 
This requires the scheduler to make sequential decisions before the outcomes of previous ones are revealed.

On the other hand, many inference scheduling studies prioritize latency while overlooking load imbalance among edge servers. 
A scheduler may continuously route requests to a high-capability server,
gradually pushing it toward resource limits. 
For LLM serving, such saturation is particularly risky because each admission decision creates a stateful resource commitment on the edge server. Since a request's resource demand continuously grows during response generation, redirecting new arrivals cannot immediately relieve an already saturated server.
This concentration leaves servers vulnerable to lengthy requests, traffic bursts, or scheduling errors, and such localized pressure could be amplified into system-wide tail-latency inflation and throughput loss, degrading the quality-of-service (QoS) of heavily loaded servers while leaving lightly loaded ones under-utilized~\cite{chen26arxiv, yi23tmc}.
Load balancing is thus not merely about resource utilization, but a vital long-term risk dispersion requirement missed by latency-only schedulers.

However, edge LLM inference scheduling introduces three key challenges.
\textit{First}, evaluating end-to-end latency requires modeling a request's complete lifecycle across wireless transmission and the inference pipeline, including batching, prefill, and decoding stages. Emerging LLM serving techniques such as continuous batching~\cite{yu22osdi,kwon23sosp} further allow requests to dynamically join or leave active batches, introducing complex dependencies that make realistic LLM inference substantially harder to model and analyze than conventional edge tasks.
\textit{Second}, conventional indicators like queue length cannot fully reflect actual workloads, as requests with diverse input/output token lengths consume varying inference resources over different duration.
It is necessary to map this resource consumption to a workload metric that also accounts for heterogeneous server capabilities~\cite{jin26infocom}.
Load balancing therefore relies on a metric capable of jointly capturing the intensity and duration of resource occupation across diverse requests and edge servers.
\textit{Third}, jointly addressing delayed latency feedback and long-term load balancing poses a further challenge.
The end-to-end latency impact of a scheduling decision remains unobservable until the inference request completes, leaving the scheduler without a timely learning signal~\cite{li25tsc}. Meanwhile, load balancing depends on workloads accumulated across slots and cannot be enforced by penalizing isolated decisions. This mismatch substantially complicates scheduler design.

To address these challenges, in this paper, we model each request's inference lifecycle and server-side resource occupation at a fine granularity, formulating an online scheduling problem that minimizes end-to-end latency subject to peak-memory feasibility and long-term load-balancing constraints.
This problem is difficult to solve due to its long-term constraints, request-server heterogeneity, and cross-slot delayed feedback.
We propose a \textbf{Ly}apunov-guided \textbf{r}eward-r\textbf{e}distribution \textbf{o}nline request scheduling (LYREO) approach that jointly addresses these difficulties through constraint-aware long-term guidance and delayed-feedback-aware policy learning. The main contributions are summarized as follows.

\begin{itemize}
    \item  We develop a fine-grained model that captures edge LLM inference throughout transmission, batching, prefill, and iteration-level decoding. 
    The model explicitly tracks the evolution of the key-value (KV) cache occupation on each edge server driven by unfinished requests and continuous batch updates,
    thereby quantifying how each assignment reshapes serving conditions and end-to-end latency.

    \item Building on this KV cache trajectory, we characterize each server's workload using a normalized KV cache memory-time consumption metric that jointly reflects memory intensity and active duration. Based on this metric, we formulate the long-term end-to-end latency-minimization problem subject to peak memory feasibility and inter-server load-balancing constraints.

    \item We put forth a novel approach, named LYREO, to address this optimization problem. LYREO leverages Lyapunov virtual queues to adaptively satisfy long-term load-balancing constraints. To effectively learn from delayed feedback, LYREO employs a long short-term memory (LSTM)-based reward redistribution mechanism, supported by sequence truncation and value bootstrapping, to provide timely learning signals for online policy learning.

    \item We evaluate the proposed LYREO approach across diverse system and algorithm configurations. The results show consistent improvements in end-to-end latency and load-balancing deviation over the baseline approaches.
\end{itemize}

The rest of this paper is organized as follows. Section~\ref{rw} reviews related work and highlights the novelty of this paper. 
Section~\ref{sys} presents the system model and problem formulation. 
Section~\ref{alg} presents LYREO and provides its theoretical analysis. 
Section~\ref{eval} reports and discusses the evaluation results, and Section~\ref{concl} concludes the paper.

\section{Related Work}\label{rw}
Recently, numerous studies have improved LLM inference serving through batching, scheduling, and memory management.
Orca~\cite{yu22osdi} introduced iteration-level scheduling, which executes LLM inference at the granularity of individual token-generation iterations rather than complete requests.
vLLM~\cite{kwon23sosp} introduced PagedAttention to dynamically manage the KV cache of requests with varying lengths.
Together, these techniques establish the foundation of continuous batching, substantially improving GPU utilization and latency for agentic AI applications~\cite{li24hpec}. Nonetheless, they primarily optimize inference engines or tightly connected accelerator clusters, leaving wireless transmission, time-varying user connectivity, and request assignment across geographically distributed edge servers outside the scheduling process.

Some recent studies have extended LLM inference to mobile edge networks. 
The wireless edge LLM inference framework in~\cite{zhang25twc} integrates request batching and model quantization, and jointly optimizes batch scheduling and resource allocation to maximize inference throughput under heterogeneous latency and accuracy requirements.
In~\cite{huang25iotj}, a two-timescale framework was introduced to coordinate slow-timescale LLM deployment with fast-timescale batch scheduling, GPU resource allocation, and bandwidth allocation in edge-cloud networks to minimize energy cost and end-to-end latency.
Additionally,~\cite{zhang25iotj} proposed a collaborative inference framework that jointly selects distributed devices and partitions the LLM into deployable shards to reduce inference latency and improve throughput.
Nevertheless, these studies generally treat each inference request as a conventional atomic task with simplified latency models, failing to capture the multi-stage and cross-slot dynamics of practical LLM serving.
Load balancing remains comparatively underexplored in edge LLM inference. 
A recent framework in~\cite{mou26tsc} combined workload prediction with reinforcement learning in scheduling to reduce inference latency and balance GPU utilization across edge LLM instances.
CoLLM~\cite{li24iccc} enabled collaborative LLM inference across resource-constrained devices to reduce inference latency while balancing energy consumption.
However, these load-aware methods typically overlook time-integrated resource occupation, and long-term load balancing of heterogeneous requests and edge servers remains largely unexplored.

\begin{figure*}[t]
	\centering\includegraphics[width=0.65\linewidth]{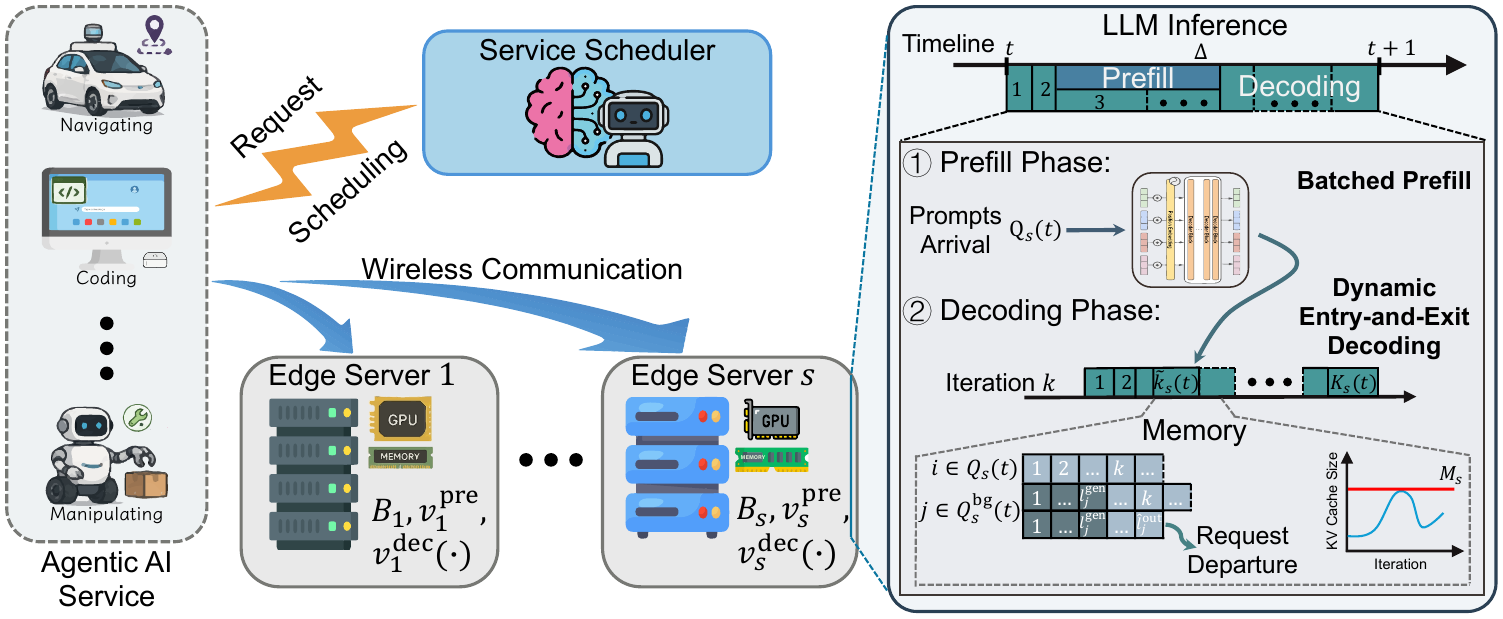}
	\caption{System model of the edge-assisted LLM inference framework.}
	\label{system}
	\vspace{-0.1in}
\end{figure*}

To improve performance in edge LLM inference, recent studies have widely applied deep reinforcement learning (DRL)~\cite{zhu26twc, younesi25ucc,qiu25fcn} and Lyapunov optimization~\cite{ma26infocom,tang26arxiv} to coordinate request scheduling, offloading, and resource allocation. 
Nonetheless, these studies rely on per-slot system observations and immediately available performance signals, without explicitly addressing the delayed nature of inference feedback. Consequently, how to attribute delayed outcomes to their causal scheduling decisions remains an open problem.

Unlike prior studies, this paper jointly models the cross-slot, multi-stage execution of edge LLM inference and characterizes the resulting edge server workload.
Furthermore, we develop a Lyapunov-guided reward-redistribution online scheduling approach that resolves the delayed feedback while maintaining long-term load balance across heterogeneous edge servers.
To our knowledge, these coupled issues have not been jointly addressed in previous work.


\section{System Model \& Problem Formulation}\label{sys}
In this section, we first present an overview of the considered edge-assisted LLM inference system in Section~\ref{S-A}. 
Sections~\ref{S-B}--\ref{S-D} then detail the communication, LLM inference, and memory models, respectively.
Finally, Section~\ref{S-E} formulates the corresponding optimization problem.

\subsection{System Overview}\label{S-A}
As illustrated in Fig.~\ref{system}, the edge LLM-based agentic AI system comprises an LLM service scheduler, multiple mobile users, and a set of edge servers denoted by $\mathcal{S} = \{1, 2, \ldots, S\}$. 
The system operates in equal discrete time slots $t \in \mathcal{T} = \{1, \ldots, T\}$ of length $\Delta$ seconds.
Let $\mathcal{U}=\{1,\ldots,U\}$ denote the set of subscribed mobile users with time-varying locations.
These mobile users represent devices running LLM-powered agentic applications, whose reasoning invocations are served by shared edge-hosted models. Operating at the inference-service layer, we represent these application-generated invocations using a slotted request model.
Specifically, at the beginning of each time slot $t$, every user $u\in\mathcal{U}$ generates one inference request, indexed by $i=(u,t)$. 
The inference requests generated by users in time slot $t$ are collected in set $\mathcal{Q}(t)=\{(u,t)\mid u\in\mathcal{U}\}$.
Each request $i\in \mathcal{Q}(t)$ is characterized by a tuple $\langle l_i^{\text{in}}, \hat{l}_i^{\text{out}}, \boldsymbol{o}_i \rangle$, where $l_i^{\text{in}} \in \mathbb{Z}^+$ and $\hat{l}_i^{\text{out}} \in \mathbb{Z}^+$ are the input and output token lengths, respectively.
Here, $\hat{l}_i^{\text{out}}$ is treated as a known input, whether user-specified or estimated via existing methods~\cite{zhu26twc}.
$\boldsymbol{o}_i \in \mathbb{R}^2$ denotes the geographical coordinates of the user when generating request $i$.\looseness=-1

The scheduling decision is denoted by $x_{i,s}(t) \in \{0,1\}$, where $x_{i,s}(t) = 1$ indicates that request $i\in\mathcal{Q}(t)$ is assigned to edge server $s$, and $x_{i,s}(t)=0$ otherwise.
The decisions are made by the service scheduler at the beginning of each time slot, based on the collected request metadata. 
To reflect practical LLM inference systems, we do not assume edge servers are idle at scheduling time, i.e., they may still be processing backlogged requests from previous time slots.
Let $\mathcal{Q}_s(t) = \{i \in \mathcal{Q}(t) \mid x_{i,s}(t) = 1\}$ be the subset of requests scheduled to edge server $s$ in time slot $t$. 
These requests are then transmitted to the edge servers and processed accordingly, as detailed in Sections~\ref{S-B} and~\ref{S-C}.
Upon completion, the inference results are returned to the users.

\subsection{Communication Model}\label{S-B}
Following the scheduling decisions, users transmit their input request tokens to the assigned edge servers via wireless links.
Let $B_s$ denote the total bandwidth of edge server $s$, which is equally shared among all requests scheduled to it. 
For request $i \in \mathcal{Q}_s(t)$, the transmission rate is given by
\begin{equation}
    R_{i,s}(t) = \frac{B_s}{|\mathcal{Q}_s(t)|} \log_2\left(1 + \frac{p_i h_{i,s}(t)}{N_{0,s}B_s/|\mathcal{Q}_s(t)|}\right),
\end{equation}
where $N_{0,s}$ is the noise power spectral density (PSD) at edge server $s$, and $p_i$ denotes the transmission power of the user generating request $i$.
$h_{i,s}(t)$ denotes the channel gain between the user and edge server $s$, modeled as a function of the distance $\|\boldsymbol{o}_i - \boldsymbol{o}_s\|$~\cite{li23tvt},
where $\boldsymbol{o}_s$ is the location of edge server $s$.
Then, the transmission delay of request $i$ is given by
\begin{equation}
    D^{\text{tx}}_{i,s}(t) = \frac{\beta l_i^{\text{in}}}{R_{i,s}(t)},
\end{equation}
where $\beta$ (in bits) is the data size of a single token.

Since inference computation begins only after all scheduled prompts have been received~\cite{li25lcn}, 
let $D_s^{\text{sy}}(t) = \max_{i\in \mathcal{Q}_s(t)} D^{\text{tx}}_{i,s}(t)$ be the synchronization delay for the batch at edge server $s$.
Correspondingly, the waiting delay for request $i$ is determined by the difference between this synchronization delay and its  transmission time, which is given by
\begin{equation}
    D^{\text{wa}}_{i,s}(t) = D_s^{\text{sy}}(t) - D^{\text{tx}}_{i,s}(t).
\end{equation}

Given that edge servers are equipped with significantly higher transmit power and wider downlink bandwidth compared to mobile users, the feedback latency after inference completion is considered negligible in this work.

\subsection{LLM Inference Model}\label{S-C}
Modern LLMs rely on decoder-only transformer architectures, comprising prefill and decoding phases. In contrast to previous studies that simplify this process via static batching and single-slot completion, this work incorporates a highly realistic serving framework driven by continuous batching and iteration-level dynamics. The two phases are detailed below.

\subsubsection{Prefill Phase}
In the prefill phase, the input tokens are processed to compute the first output token and generate the corresponding KV cache.
To fully exploit the parallel computing capability of GPUs, the tokens from multiple requests can be aggregated and processed simultaneously through large-scale matrix multiplications~\cite{kwon23sosp}.
Accordingly, all requests arriving at edge server $s$ within time slot $t$, i.e., $\mathcal{Q}_s(t)$, are grouped into a single batch for one forward-pass computation, known as batched prefill~\cite{zhang25twc}.
Since prefill is inherently compute-bound with relatively stable throughput, we assume its computation is independent of decoding tasks as long as sufficient KV cache memory is provisioned.
Let $v_s^{\text{pre}}$ (in tokens/s) be the prefill rate of edge server $s$.
The prefill latency of the batched requests scheduled to edge server $s$ in time slot $t$ is determined by the total number of input tokens and the edge server's computational capability, expressed as~\cite{huang25iotj, cheng24arxiv}
\begin{equation}
    D^{\text{pre}}_{s}(t) = \frac{\sum_{i \in \mathcal{Q}_s(t)} l_i^{\text{in}}}{v_s^{\text{pre}}}.
\end{equation}

\subsubsection{Decoding Phase}
In the decoding phase, the LLM generates output tokens autoregressively, producing one token per iteration until the output length is reached.
At each iteration, the inference engine takes the previously generated token as input and performs computation based on the KV cache of all preceding tokens, which is updated incrementally as decoding proceeds.
Unlike prefill, decoding iterations cannot be parallelized due to strict data dependencies. Consequently, despite minimal computation, decoding remains highly memory-bound because it requires repeatedly loading model weights and the accumulated KV cache.
We consider an advanced decoding mechanism enabled by iteration-level scheduling termed dynamic entry-and-exit decoding (DEED)~\cite{yu22osdi}, featuring: 1) \textit{early-finished eviction}, where completed requests are removed from the batch and their KV cache is released immediately from memory at the end of iteration; and 2) \textit{late-joining admission}, where new requests are admitted to the active batch and decoded together with backlogged requests,
provided sufficient KV cache capacity is available.

Based on the DEED mechanism, let $\mathcal{Q}^{\text{bg}}_s(t)$ be the set of backlogged requests remaining at the edge server $s$ at the beginning of time slot $t$.
Since the decoding phase typically dominates the end-to-end serving latency while the prefill latency is comparatively short~\cite{zhu26twc},
we assume, for a reasonable setting of $\Delta$, all backlogged requests in $\mathcal{Q}^{\text{bg}}_s(t)$ have already completed the prefill phase (including transmission) and remain only in the decoding phase.
Let $l_j^{\text{gen}}(t)$ be the cumulative number of generated tokens for request $j\in \mathcal{Q}^{\text{bg}}_s(t)$ at the beginning of slot $t$.
While newly scheduled requests undergo transmission and prefill, the inference engine continues decoding these backlogged requests in $\mathcal{Q}^{\text{bg}}_s(t)$.
Specifically, during the time budget $D_s^{\text{sy}}(t) + D^{\text{pre}}_{s}(t)$, the KV cache memory occupied on the GPU of edge server $s$ increases linearly with the generated tokens~\cite{li24hpec}. 
The KV cache memory occupation at the $k$-th decoding iteration in time slot $t$ on edge server $s$ can be derived as
\begin{equation}
    m_{s}^{\text{I}}(k,t) = \alpha \sum_{j \in \mathcal{G}_s(k,t)}(l^{\text{in}}_{j} + l^{\text{gen}}_{j}(t)+ k),
\end{equation}
where $\alpha$ is the per-token KV cache size\footnote{We approximate discrete KV cache allocation as continuous, as tail-block internal fragmentation is negligible relative to overall sequence lengths.}, and $\mathcal{G}_s(k,t) = \{j \in \mathcal{Q}^{\text{bg}}_s(t) \mid k \le \hat{l}_{j}^{\text{out}}-l^{\text{gen}}_j(t)\}$ denotes the set of active background requests at the $k$-th decoding iteration.
The condition $k \le \hat{l}_{j}^{\text{out}}-l^{\text{gen}}_j(t)$ in $\mathcal{G}_s(k,t)$ reflects the early-finished eviction of DEED, where the request is removed from the summation as its KV cache is released once decoding is complete.

Denote the decoding throughput of edge server $s$ (in steps/s) by $v^{\text{dec}}_s(\cdot)$, which is a function of the current total KV cache size. 
The time per decoding iteration on edge server $s$ when $m_{s}^{\text{I}}(k,t) > 0$ is given by
\begin{equation}\label{dectime1}
    \tau_s^{\text{I}}(k,t) = \frac{1}{(1-w\mathbb{I}_s(k,t)) \cdot v^{\text{dec}}_s(m_{s}^{\text{I}}(k,t))},
\end{equation}
where $\mathbb{I}_s(k,t)\in\{0,1\}$ indicates the operational status of inference engine.
Specifically, $\mathbb{I}_s(k,t) = 0$ when the server is in a pure decoding state, while $\mathbb{I}_s(k,t) = 1$ indicates the presence of concurrent prefill computation, which reduces the throughput by a fraction $w\in(0,1)$.
Building on the per-iteration decoding time, the number of decoding iterations that can be completed on edge server $s$ before the prefill phase finishes in time slot $t$ can be determined by
\begin{equation}
\begin{aligned}
    \tilde{k}_s(t)
    = \max \Bigl\{ k\Bigm|\;
    &\sum_{\kappa=1}^{k} \tau_s^{\text{I}}(\kappa,t)
    \leq D_s^{\text{sy}}(t) + D^{\text{pre}}_{s}(t), \\
    &\quad m_{s}^{\text{I}}(k,t) > 0
    \Bigr\},
\end{aligned}
\end{equation}
where the condition $m_{s}^{\text{I}}(k,t) > 0$ ensures that the iteration counts only when there are backlogged requests to be decoded.

Once the prefill phase finishes,
the newly prefilled requests enter the decoding batch immediately through the late-joining admission mechanism of DEED\footnote{We ignore the sub-iteration waiting time for new requests to align with iteration boundaries, as it is negligible compared to the slot duration and overall inference latency.}.
Let $\mathcal{H}_s(k,t) = \{i \in \mathcal{Q}_s(t) \mid k-\tilde{k}_s(t) \le \hat{l}_{i}^{\text{out}}\}$ denote the set of active new requests at the $k$-th decoding iteration.
Therefore, the total KV cache size on edge server $s$ can be calculated as 
\begin{equation}
\begin{aligned}
    m_{s}^{\text{II}}(k,t) = \ & \alpha \sum_{i \in \mathcal{H}_s(k,t)} (l^{\text{in}}_i + k  - \tilde{k}_s(t)) \\
    & + \alpha \sum_{j \in \mathcal{G}_s(k,t)} (l^{\text{in}}_{j} + l^{\text{gen}}_{j}(t) + k) ,
\end{aligned}
\end{equation}
where the first term accounts for the KV cache of active newly scheduled requests, and the second term accounts for that of active backlogged requests. 
Note that both the $m_{s}^{\text{I}}(k,t)$ and $m_{s}^{\text{II}}(k,t)$ are not necessarily monotonic with respect to $k$, as the early-finished eviction in DEED continuously removes completed requests and releases their KV cache, while the late-joining admission of new requests and the token generation of active requests increase the KV cache.
Given the total KV cache size, the per-iteration decoding time and the number of iterations completed between prefill and the end of time slot $t$ on edge server $s$ can be respectively expressed as 
\begin{equation}\label{dectime2}
    \tau_s^{\text{II}}(k,t) = \frac{1}{ v^{\text{dec}}_s(m_{s}^{\text{II}}(k,t))},
\end{equation}
\begin{equation}
\begin{aligned}
    \bar{k}_s(t)
    = \max \Bigl\{ k\Bigm|&\;
    \sum_{\kappa=1}^{k} \tau_s^{\text{II}}(\tilde{k}_s(t) + \kappa,t) \leq \Delta - D_s^{\text{sy}}(t) \\
    &\ - D^{\text{pre}}_{s}(t), \ m_{s}^{\text{II}}(\tilde{k}_s(t)+k,t) > 0
    \Bigr\}.
\end{aligned}
\end{equation}

Accordingly, for request $i$ scheduled to edge server $s$ in time slot $t$, the decoding latency is calculated by accumulating the per-iteration decoding latency over all output tokens, given by
\begin{equation}
    D_{i,s}^{\text{dec}}(t) = \sum_{k=1}^{\hat{l}^{\text{out}}_{i}} \tau'_{i,s}(k),
\end{equation}
where $\tau'_{i,s}(k)$ denotes the decoding time when generating the $k$-th output token of request $i$. 
Note that $\tau_s^{\text{I}}(k,t)$, $\tau_s^{\text{II}}(k,t)$, and $\tau'_{i,s}(k)$ represent the same physical quantity viewed from two different perspectives: $\tau'_{i,s}(k)$ tracks the per-iteration decoding time from the perspective of an individual request, 
whereas $\tau_s^{\text{I}}(k,t)$ and $\tau_s^{\text{II}}(k,t)$ characterize it along the system timeline in time slot $t$. 
Since the per-iteration decoding time along the system timeline has been fully derived in~\eqref{dectime1} and~\eqref{dectime2}, $\tau'_{i,s}(k)$ can be directly obtained via the corresponding cross-slot index mapping, which is omitted here for brevity.

For edge server $s$, the total number of completed decoding iterations within time slot $t$ is the sum of iterations from both phases, which is given by $K_s(t) = \tilde{k}_s(t) + \bar{k}_s(t)$.
Therefore, the set of backlogged requests at the beginning of time slot $t+1$ is updated as
\begin{equation}
\begin{aligned}
    \mathcal{Q}^{\text{bg}}_s(t+1)  
    =&\ \{i \in \mathcal{Q}_s(t) \mid \hat{l}_i^{\text{out}} > \bar{k}_s(t) \} \\
    &\cup \{j \in \mathcal{Q}^{\text{bg}}_s(t) \mid \hat{l}_j^{\text{out}} > l_j^{\text{gen}}(t) + K_s(t) \},
\end{aligned}
\end{equation}
where the first term represents the newly scheduled requests in $\mathcal{Q}_s(t)$ that have not been completed by the end of time slot $t$, and the second term represents the previously backlogged requests in $\mathcal{Q}^{\text{bg}}_s(t)$ that remain unfinished. 
In contrast to the conventional ``clear-then-schedule'' assumption, here we do not require all requests to be completed within the current time slot, which more faithfully reflects the realistic operation of LLM inference systems.


\subsection{Memory Analysis}\label{S-D}
Unlike traditional computing tasks, LLM serving, especially the dominant autoregressive decoding, is inherently stateful and frequently bottlenecked by memory-resource availability. Since the KV cache footprint strongly affects the achievable decoding throughput and serving capacity, KV cache occupation is explicitly modeled as a key resource state~\cite{sun24osdi, kwon23sosp}.
Based on the system workflow described above, the KV cache occupation at the $k$-th decoding iteration in time slot $t$ on edge server $s$ can be unified as 
\begin{equation}
    m_s(k,t) = 
    \begin{cases}
        m_{s}^{\text{I}}(k,t),\ 1\le k \le \tilde{k}_s(t),\\
        m_{s}^{\text{II}}(k,t),\ \tilde{k}_s(t) < k \le K_s(t) .
    \end{cases}
\end{equation}

Let $M_s$ be the GPU memory capacity of edge server $s$ reserved for KV cache during LLM inference.
To ensure feasibility, the peak KV cache occupation must not exceed the available memory capacity throughout the inference process, 
\begin{equation}
    M^{\text{pk}}_s(t) = \max_{{k=1,\dots,K_s(t)} } m_s(k,t) \le M_s.
\end{equation}
Note that the peak memory constraint defined above naturally unifies two feasibility requirements. 
First, it ensures that the KV cache memory does not overflow during the autoregressive decoding phase as tokens are incrementally generated. 
Second, it acts as an admission feasibility condition for newly scheduled requests, ensuring that sufficient GPU memory is available for the KV cache generated during prefill. As a result, admitted requests can start the prefill phase immediately after batching, without incurring additional engine-level waiting.

In addition to acting as a hard feasibility constraint, the KV cache occupation on edge servers reflects the continuous inference workload, as it remains persistently allocated throughout the inference lifecycle and continuously consumes GPU memory resources. 
Accordingly, we define the workload of edge server $s$ in time slot $t$ as the normalized KV cache memory-time consumption, i.e.,
\begin{equation}\label{load}
    \eta_s(t) = \frac{\sum_{k=1}^{K_s(t)}m_s(k,t)\tau_s(k,t)}{ M_s\Delta},
\end{equation}
where the unified per-iteration decoding time at the $k$-th iteration on edge server $s$ within time slot $t$ is given by
\begin{equation}
\tau_s(k,t)=
\begin{cases}
\tau_s^{\rm I}(k,t), & 1\le k\le\tilde{k}_s(t),\\
\tau_s^{\rm II}(k,t), & \tilde{k}_s(t)<k\le K_s(t).
\end{cases}
\end{equation}
In~\eqref{load}, the numerator accumulates the memory occupied at each decoding iteration weighted by its duration, measuring the cumulative KV residency incurred by the inference engine, 
while the denominator represents the maximum available memory-time capacity
of the edge server within a slot.
By jointly integrating the spatial and temporal dimensions of KV cache occupation, $\eta_s(t)$ effectively captures the KV-centric workload pressure, particularly across heterogeneous edge servers. 
In the spatial dimension, normalizing the occupation by the total capacity $M_s$ ensures that the same KV cache size imposes a much heavier load on edge servers with scarce memory. 
In the temporal dimension, under an identical KV cache state, edge servers with weaker processing capabilities exhibit longer per-iteration decoding times $\tau_s(k,t)$, incurring a larger value of the memory-time consumption term in our metric.
This dual sensitivity allows $\eta_s(t)$ to characterize the heterogeneous load distribution, providing a tractable and interpretable indicator of LLM inference workload.

\subsection{Problem Formulation}\label{S-E}
In summary, the end-to-end latency for request $i$ scheduled to edge server $s$ in time slot $t$, consisting of the transmission, batch waiting, prefill, and decoding latency, is formulated as  
\begin{equation}
    D^{\text{tot}}_{i,s}(t) = D^{\text{tx}}_{i,s}(t) + D^{\text{wa}}_{i,s}(t) + D^{\text{pre}}_{s}(t) + D_{i,s}^{\text{dec}}(t).
\end{equation}

Beyond latency, load balance across edge servers is also considered to maintain the workload deviation within acceptable bounds.
To ensure long-term load balance, the variance-based workload deviation of each edge server $s\in\mathcal{S}$ from the system average
must satisfy the following constraint
\begin{equation}\label{c4}
    \lim_{T \to \infty} \frac{1}{T} \sum_{t=1}^{T} \big(\eta_s(t) - \bar{\eta}(t)\big)^2  \le \epsilon, 
    \ \ \forall s \in \mathcal{S},
\end{equation}
where $\bar{\eta}(t) = \frac{1}{S} \sum_{s \in \mathcal{S}} \eta_s(t)$ denotes the average workload over all edge servers in time slot $t$ and 
$\epsilon$ denotes the predefined long-term tolerance threshold.

With the objective of minimizing the long-term time-average end-to-end latency of all inference requests while ensuring load balance across edge servers, we formulate an online optimization problem over the scheduling decision $\boldsymbol{x}(t) = \{x_{i,s}(t) \mid \forall i\in \mathcal{Q}(t), \forall s\in \mathcal{S}\}$ for each time slot $t\in \mathcal{T}$. 
The problem is formally stated as
%
\begin{subequations}\label{opt}
\begin{align}
    \mathbf{P1}: & \min_{\{\boldsymbol{x}(t)\}}  \lim_{T\rightarrow \infty} \frac{1}{T} \sum_{t=1}^T \mathbb{E}\Big[ \sum_{i\in \mathcal{Q}(t)}\sum_{s\in\mathcal{S}}  x_{i,s}(t)  D^{\text{tot}}_{i,s}(t) \Big], \label{obj} \\
    \text{s.t.} \ & \eqref{c4}, \nonumber\\
    & M_s^{\text{pk}}(t) \le M_s, \ \ \forall s \in \mathcal{S},\ \forall t\in\mathcal{T}, \label{c1}\\
    & x_{i,s}(t)\in\{0,1\}, \ \ \forall i\in\mathcal{Q}(t),\ \forall s \in \mathcal{S},\ \forall t\in\mathcal{T}, \label{c2}\\
    & \sum_{s \in \mathcal{S}} x_{i,s}(t) = 1, \ \ \forall i \in \mathcal{Q}(t),\ \forall t \in \mathcal{T},  \label{c3}
\end{align}
\end{subequations}
where constraint~\eqref{c4} specifies the long-term average load-balancing constraint.
Constraint~\eqref{c1} ensures that the KV cache occupation at each edge server does not exceed its available memory capacity, guaranteeing the feasibility of LLM inference execution. 
Constraints~\eqref{c2} and~\eqref{c3} enforce the binary nature of the scheduling variables and the unique assignment of each request to an edge server, respectively.

Solving problem $\mathbf{P1}$ directly is nontrivial due to the following complexities.
Fundamentally, the discrete scheduling decisions (i.e., $\boldsymbol{x}(t)$) make the problem a nonlinear integer programming problem, which is known to be NP-hard.
In addition, the attributes of inference requests (i.e., input/output token lengths) and the locations of mobile users vary over time in an a priori unknown manner, resulting in fluctuating inference workloads and time-varying channel conditions. Meanwhile, the heterogeneous prefill and decoding throughputs across edge servers further complicate the overall service latency.
Last but not least, the scheduling decisions are coupled across time slots. 
The end-to-end latency of a request (i.e., $D^{\text{tot}}_{i,s}(t)$) depends not only on its own scheduling decision in time slot $t$, but also on the system state resulting from past scheduling decisions and the future system evolution driven by subsequent scheduling decisions.
This temporal coupling, compounded by the fact that $D^{\text{tot}}_{i,s}(t)$ only becomes observable after request completion, introduces strong interdependence between decisions across time, making the optimization problem particularly challenging.

\section{LYREO Approach}\label{alg}
In this section, we develop LYREO, a novel approach that treats $\mathbf{P1}$ as a sequential decision-making problem and jointly addresses the challenges identified above. 
Specifically, 
we first reformulate problem $\mathbf{P1}$ by leveraging Lyapunov optimization to handle the long-term load-balancing constraint. 
We then introduce a reward redistribution mechanism to attribute delayed latency feedback to its associated decisions before optimizing the scheduling policy.
Finally, we provide the theoretical analysis of the proposed approach.


\subsection{Problem Reformulation via Lyapunov Optimization}
Constraint~\eqref{c4} couples the scheduling decisions over the entire time horizon and therefore cannot be evaluated from the current time slot alone. To expose the accumulated load imbalance to each scheduling decision, we define a virtual queue for the workload deviation of each edge server, whose dynamics evolve as 
\begin{equation}\label{queue}
    Z_s(t+1) = \max \{Z_s(t) + Y_s(t) - \epsilon, 0\}, \ \ s \in \mathcal{S}, 
\end{equation}
where $Y_s(t) = (\eta_s(t) - \bar{\eta}(t)) ^2$ denotes the instantaneous workload deviation, and $Z_s(t)$ denotes the length of the virtual queue (with $Z_s(0) = 0$), tracking the cumulative amount by which $Y_s(t)$ exceeds the threshold $\epsilon$. 
Queue stability implies that the long-term workload deviation does not exceed $\epsilon$, thereby satisfying constraint~\eqref{c4}.

Collecting the virtual queues of all edge servers, we define $\boldsymbol{Z}(t) = [Z_1(t), Z_2(t), \dots, Z_S(t)]$ as the queue backlog vector in time slot $t$.
To characterize how well the scheduling adheres to long-term load balancing, we define the following quadratic Lyapunov function~\cite{neely10morgan} 
\begin{equation}
    \mathcal{L}(\boldsymbol{Z}(t)) \triangleq \frac{1}{2}\sum_{s\in\mathcal{S}} Z_s(t)^2. 
\end{equation}
A small value of $\mathcal{L}(\boldsymbol{Z}(t))$ indicates that all virtual queues are close to zero, i.e., the long-term constraint is well satisfied, and thus the system should aim to keep $\mathcal{L}(\boldsymbol{Z}(t))$ small.
Then, the conditional Lyapunov drift is given by
\begin{equation}\label{drift}
\Delta(\boldsymbol{Z}(t)) \triangleq \mathbb{E}\left[\mathcal{L}(\boldsymbol{Z}(t+1)) - \mathcal{L}(\boldsymbol{Z}(t)) \mid \boldsymbol{Z}(t)\right],
\end{equation}
which represents the expected change in the Lyapunov function over time slots, and a smaller drift indicates a more stable queue. 
However, according to~\eqref{drift}, the Lyapunov drift still depends on the system state in the next time slot, making it intractable to compute directly.
To avoid relying on future system information, we derive an upper bound on the Lyapunov drift, as provided in Lemma~\ref{driftbound}.
\begin{lemma}\label{driftbound}
    The Lyapunov drift $\Delta(\boldsymbol{Z} (t))$ is  upper bounded by
    \begin{equation}\label{eq:driftbound}
        \Delta(\boldsymbol{Z}(t)) \le \Gamma + \mathbb{E} \Big[ \sum_{s\in\mathcal{S}} Z_s(t) \left( Y_s(t) - \epsilon \right) \mid \boldsymbol{Z}(t) \Big],
    \end{equation}
    where $\Gamma \geq \left( \sum_{s\in \mathcal{S}} \mathbb{E} [(Y_s(t)-\epsilon)^2]\right)/2$ is a positive constant that bounds the expected squared terms over all time slots.    
\end{lemma}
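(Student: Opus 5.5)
The plan is the standard drift-plus-penalty argument: start from the virtual queue recursion~\eqref{queue} and use the elementary inequality $(\max\{a,0\})^2 \le a^2$ for any real $a$. Applying it with $a = Z_s(t) + Y_s(t) - \epsilon$ gives, for every $s \in \mathcal{S}$ and every realization,
\begin{equation*}
Z_s(t+1)^2 \le \big(Z_s(t) + Y_s(t) - \epsilon\big)^2 = Z_s(t)^2 + 2 Z_s(t)\big(Y_s(t)-\epsilon\big) + \big(Y_s(t)-\epsilon\big)^2 .
\end{equation*}
Rearranging and halving yields the per-server one-slot bound
\begin{equation*}
\tfrac{1}{2}\big(Z_s(t+1)^2 - Z_s(t)^2\big) \le Z_s(t)\big(Y_s(t)-\epsilon\big) + \tfrac{1}{2}\big(Y_s(t)-\epsilon\big)^2 .
\end{equation*}

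Next, I will sum this over $s \in \mathcal{S}$, so that the left-hand side becomes $\mathcal{L}(\boldsymbol{Z}(t+1)) - \mathcal{L}(\boldsymbol{Z}(t))$. Taking the conditional expectation given $\boldsymbol{Z}(t)$, as in~\eqref{drift}, gives
\begin{equation*}
\Delta(\boldsymbol{Z}(t)) \le \tfrac{1}{2}\,\mathbb{E}\Big[\sum_{s\in\mathcal{S}} \big(Y_s(t)-\epsilon\big)^2 \,\Big|\, \boldsymbol{Z}(t)\Big] + \mathbb{E}\Big[\sum_{s\in\mathcal{S}} Z_s(t)\big(Y_s(t)-\epsilon\big) \,\Big|\, \boldsymbol{Z}(t)\Big].
\end{equation*}
The second term is exactly the decision-dependent term in~\eqref{eq:driftbound}, so it remains only to replace the first term by the constant $\Gamma$.

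The main obstacle is justifying that this quadratic term is bounded uniformly, since the lemma only asserts a $\Gamma$ dominating the (unconditional) expected squared terms. I will argue this from the workload definition~\eqref{load}. Because constraint~\eqref{c1} forces $m_s(k,t) \le M_s$, and the decoding iterations within a slot satisfy $\sum_{k} \tau_s(k,t) \le \Delta$, we get $0 \le \eta_s(t) \le 1$. Consequently $0 \le Y_s(t) = (\eta_s(t)-\bar{\eta}(t))^2 \le 1$, so $(Y_s(t)-\epsilon)^2 \le \max\{\epsilon^2,(1-\epsilon)^2\}$ almost surely, for any conditioning.

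It follows that the conditional quadratic term is at most $\tfrac{S}{2}\max\{\epsilon^2,(1-\epsilon)^2\}$ for every $t$ and every $\boldsymbol{Z}(t)$. Any positive constant $\Gamma$ at least this large, which automatically also satisfies $\Gamma \ge \tfrac{1}{2}\sum_{s}\mathbb{E}[(Y_s(t)-\epsilon)^2]$ as stated, can therefore be used in place of the first term. Substituting it completes~\eqref{eq:driftbound}.
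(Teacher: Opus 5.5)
Your proposal is correct and follows the paper's proof essentially step for step: square the recursion~\eqref{queue} using $([a]^+)^2\le a^2$, expand, sum over $s\in\mathcal{S}$, halve, and take the conditional expectation given $\boldsymbol{Z}(t)$. Your one addition is the almost-sure bound $0\le\eta_s(t)\le 1$, obtained from~\eqref{c1} and $\sum_{k}\tau_s(k,t)\le\Delta$, and it makes the final substitution rigorous. The paper skips this: it replaces $\tfrac{1}{2}\mathbb{E}[\sum_{s}(Y_s(t)-\epsilon)^2\mid\boldsymbol{Z}(t)]$ by $\Gamma$ even though $\Gamma$ is only assumed to dominate the unconditional expectation.
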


\begin{proof}
    Squaring both sides of the virtual queue update in~\eqref{queue} and using the fact that $\{[x]^+\}^2\le x^2$ for any $x$ gives
    \begin{equation}\label{l1e1}
    \begin{aligned}
        Z_s(t+1)^2 & = \left\{ [Z_s(t) + Y_s(t) - \epsilon]^+\right\}^2 \\
        &\le (Z_s(t) + Y_s(t) - \epsilon)^2, \ \ s \in \mathcal{S}.
    \end{aligned}
    \end{equation}
    
    Summing~\eqref{l1e1} over all $s \in \mathcal{S}$ and dividing by $2$ yields
    \begin{equation}
    \begin{aligned}
        \frac{1}{2} \sum_{s\in \mathcal{S}} Z_s(t+1)^2 
        & \le \frac{1}{2} \sum_{s\in \mathcal{S}}  Z_s(t)^2 + \frac{1}{2}  \sum_{s\in \mathcal{S}} (Y_s(t) - \epsilon)^2  \\ 
        & \quad +  \sum_{s\in \mathcal{S}} Z_s(t) (Y_s(t) - \epsilon).
    \end{aligned}
    \end{equation}
    
    Subtracting $\left(\sum_{s\in\mathcal{S}} Z_s(t)^2\right)/2$ from both sides and incorporating the conditional expectation with respect to $\boldsymbol{Z}(t)$, the upper bound for $\Delta(\boldsymbol{Z} (t))$ can be derived as    
    \begin{equation}
    \begin{aligned}
        \Delta(\boldsymbol{Z} & (t))  = \mathbb{E} \Big[ \frac{1}{2}\sum_{s\in \mathcal{S}}Z_s(t+1)^2 - \frac{1}{2}\sum_{s\in \mathcal{S}}Z_s(t)^2  \mid\boldsymbol{Z}(t) \Big]\\
        &\le \mathbb{E} \Big[  \frac{1}{2} \sum_{s\in \mathcal{S}}   (Y_s(t) - \epsilon)^2 + \sum_{s\in \mathcal{S}} Z_s(t) (Y_s(t) - \epsilon)\mid\boldsymbol{Z}(t) \Big].
    \end{aligned}
    \end{equation}    
    Substituting the definition of $\Gamma$ into the preceding inequality yields~\eqref{eq:driftbound}, which completes the proof.
\end{proof}

Lemma~\ref{driftbound} provides an upper bound on the Lyapunov drift that no longer explicitly depends on the future virtual-queue state.
%
Based on the Lyapunov optimization framework, the objective function of the original problem  $\mathbf{P1}$ 
can be rewritten as the minimization of the following drift-plus-penalty function, 
\begin{equation}\label{driftpp}
    \begin{aligned}
        \Delta(\boldsymbol{Z} & (t)) + \nu\mathbb{E}\Big[ \sum_{i\in \mathcal{Q}(t)}\sum_{s\in\mathcal{S}}  x_{i,s}(t)  D^{\text{tot}}_{i,s}(t) \mid \boldsymbol{Z} (t) \Big]\\
        &\le \Gamma + \mathbb{E} \Big[ \sum_{s\in\mathcal{S}} Z_s(t) \left( Y_s(t) - \epsilon \right) \mid \boldsymbol{Z}(t) \Big] \\ 
        & \quad + \nu \mathbb{E}\Big[ \sum_{i\in \mathcal{Q}(t)}\sum_{s\in\mathcal{S}}  x_{i,s}(t)  D^{\text{tot}}_{i,s}(t) \mid \boldsymbol{Z} (t) \Big],
    \end{aligned}
\end{equation}
where $\nu>0$ is a control parameter that balances the trade-off between minimizing end-to-end latency and maintaining long-term load balance.
By dropping terms independent of $\boldsymbol{x}(t)$,
we obtain the following per-slot reformulation of problem $\mathbf{P1}$,
\begin{equation}\label{opt2}
\begin{aligned}
    \mathbf{P2}:\ \min_{\boldsymbol{x}(t)} \  \mathbb{E}&\Big[ \nu \sum_{i\in \mathcal{Q}(t)} \sum_{s\in\mathcal{S}} x_{i,s}(t) D^{\text{tot}}_{i,s}(t) \\
    & + \sum_{s\in\mathcal{S}} Z_s(t)Y_s(t) \mid \boldsymbol{Z}(t) \Big], \\
    \text{s.t.} \ & \eqref{c1}, \ \eqref{c2}, \ \eqref{c3}.
\end{aligned}
\end{equation}

Note that, in problem $\mathbf{P2}$, the long-term load-balancing constraint of $\mathbf{P1}$ has been transformed into a per-slot corrective term.
However, $\mathbf{P2}$ remains intractable, mainly because the latency term  $D^{\text{tot}}_{i,s}(t)$ in the objective depends on future system information.
Specifically, the scheduling decision must be committed in the current time slot, whereas the resulting end-to-end latency can only be observed after inference completion, and is further influenced by the arrival and scheduling of subsequent requests assigned to the same edge server. 
To this end, in the next subsection, we propose a novel policy learning framework to efficiently address this challenge.

\subsection{Delayed-Feedback-Aware Policy Learning}
\subsubsection{Sequence-Markov Decision Process Formulation}
Problem $\mathbf{P2}$ constitutes a sequential decision-making process, as each scheduling action updates the system states encountered by subsequent decisions.
Accordingly, we formulate it as a sequence-Markov decision process (SDP), represented by the tuple $\langle \mathbf{S}, \mathbf{A}, \mathbf{P}, \mathbf{R}, \gamma \rangle$, where, unlike a standard Markov decision process, the rewards are not required to satisfy the Markov property.
At each time step $t$, the scheduler observes the current state $\boldsymbol{s}_t \in \mathbf{S}$ and takes an action $\boldsymbol{a}_t \in \mathbf{A}$, which transitions the environment to a new state $\boldsymbol{s}_{t+1}$ with probability $\Pr[\boldsymbol{s}_{t+1}\mid \boldsymbol{s}_t, \boldsymbol{a}_t]$ and returns a reward $r_{t+1} \in \mathbf{R}$.
The discount factor $\gamma \in (0, 1)$ balances the immediate and future rewards. 
The detailed definitions of the state and action space, and reward function are provided as follows.

$\bullet$ \textit{State Space.} 
The state represents the system information available to the scheduler in each time slot $t$, defined as 
\begin{equation}
        \boldsymbol{s}_t = \{ \boldsymbol{s}^{\text{req}}_t, \boldsymbol{s}^{\text{ser}}_t \},
\end{equation}
where $\boldsymbol{s}^{\text{req}}_t = [l_i^{\text{in}}, \hat{l}_i^{\text{out}}, \boldsymbol{o}_i]_{i\in \mathcal{Q}(t)}$ denotes the attributes of the inference requests to be scheduled, 
and  $\boldsymbol{s}^{\text{ser}}_t = [B_s, v_s^{\text{pre}}, v_s^{\text{dec}}(c), \vert\mathcal{Q}^{\text{bg}}_s(t)\vert, \eta_s(t-1), Z_s(t)]_{s \in \mathcal{S}}$ 
denotes the server-side state information, including the attributes (i.e., bandwidth, prefill and decoding throughput) and the dynamic status (i.e., the number of backlogged requests,  the recent workload, and the virtual-queue length of load-balancing deviation) of each edge server.
Here, $v_s^{\text{dec}}(c)$ denotes the decoding throughput evaluated at a constant reference occupation $c$, serving as an indicator of the server's decoding capability.

$\bullet$ \textit{Action Space.} 
In each time slot $t$, the scheduler determines the scheduling decision for all requests in $\mathcal{Q}(t)$. 
The action is defined as $\boldsymbol{a}_t = \boldsymbol{x}(t) = \{x_{i,s}(t) \mid \forall i\in \mathcal{Q}(t), \forall s\in \mathcal{S}\} $.

$\bullet$ \textit{Reward Function.} 
By taking action $\boldsymbol{a}_t$ under state $\boldsymbol{s}_t$, the scheduler receives a numerical reward defined according to the objective function in~\eqref{opt2}, given by
\begin{equation}\label{rewardfunc}
        r_{t+1} =  -\nu \sum_{s\in\mathcal{S}} \sum_{i\in \mathcal{Q}'_s(t)} D^{\text{tot}}_{i,s}(t) - \sum_{s\in\mathcal{S}} Z_s(t)Y_s(t)- \varrho \Upsilon(t),
\end{equation}
where $\mathcal{Q}'_s(t)$ represents the set of requests that complete their inference on edge server $s$ in time slot $t$. 
$\Upsilon(t) = \sum_{s\in\mathcal{S}}[M_s^{\text{pk}}(t) - M_s]^+$ is a penalty term enforcing the GPU memory constraint, and $\varrho > 0$ is the corresponding penalty coefficient. 
Overall, the reward jointly incorporates the total latency of requests, the load-balancing deviation captured by the virtual queue, and the memory feasibility penalty. 

\subsubsection{Reward Redistribution}
As discussed above, the latency term in~\eqref{rewardfunc} introduces delayed rewards into the framework, 
so the scheduler cannot immediately evaluate which earlier assignment caused the eventual outcome.
Inspired by~\cite{arjona19neurips}, we introduce reward redistribution to recover a decision-level reward signal.
Reward redistribution is a procedure for an SDP that redistributes the total return $\sum_{t=0}^{T}r_{t+1}$ over the sequence of state-action pairs $( \boldsymbol{s}_0, \boldsymbol{a}_0, \dots, \boldsymbol{s}_T, \boldsymbol{a}_T )$.
Since reward redistribution preserves the cumulative return of the sequence, the expected return under any policy remains unchanged. 
Therefore, the original SDP and the redistributed SDP share the same optimal policy, indicating that reward redistribution does not alter the underlying optimization objective.

Following the optimal second-order Markov reward redistribution  in~\cite{arjona19neurips,chen21tcom, gui26tnse}, 
we consider a reward signal that satisfies
\begin{equation}\label{optrr}
        \mathbb{E}[r'_{t+1} \mid \chi_{t-1}, \chi_t ]  = 
        q^{\pi}(\chi_t) - q^{\pi}(\chi_{t-1}),
\end{equation}
where $\chi_t=(\boldsymbol{s}_t,\boldsymbol{a}_t)$ and $q^{\pi}(\chi_t)$ denote the  state-action pair in time slot $t$ and its corresponding $Q$-value under policy $\pi$, respectively.
Since the $Q$-value represents the expected cumulative return starting from a given state-action pair, Eq.~\eqref{optrr} implies that the redistributed reward mathematically captures the increment in the expected return brought by the current state-action pair.
By immediately assigning this increment to that pair, the expected future redistributed reward becomes zero, which eliminates reward delay in expectation. 
Consequently, this formulation provides a step-wise objective for policy learning.
To illustrate this intuitively, if a state-action pair increases the cumulative return, i.e., $q^{\pi}(\chi_{t}) > q^{\pi}(\chi_{t-1})$, the corresponding redistributed reward $r'_{t+1}$ becomes positive, even though this improvement may not be reflected immediately in the original delayed reward.

Although the second-order Markov reward redistribution is theoretically optimal for finite-horizon problems, it relies on predicting the cumulative return associated with each observed state-action sequence.
In finite-horizon settings, the complete sequence return naturally serves as the supervision target for this prediction.
However, under an infinite-horizon formulation, the complete sequence return is unavailable.
To overcome this limitation, we develop a novel infinite-horizon return prediction framework by integrating sequence truncation with value bootstrapping.
Specifically, instead of using the complete sequence return as the supervision target, we construct a return target that combines the actual rewards observed up to a truncation point $H$ with a bootstrapped estimate for the remaining horizon.
We define this return target as 
\begin{equation}\label{ytarget}
    \hat{y}_H \triangleq  \sum_{h=0}^{H-1}\gamma^{h}r_{h+1} + \gamma^{H} V_\phi(\boldsymbol{s}_{H}),
\end{equation}
where the first term is the cumulative discounted reward over the first $H$ steps, i.e., the original infinite-horizon trajectory is truncated after $H$ steps, 
while the second term is a bootstrap estimate of the remaining discounted return, provided by the value network introduced later.

Since this return prediction requires estimating the expected return conditioned on variable-length state-action sequences,
we employ an LSTM network, parameterized by $\psi$ and denoted by $g_{\psi}(\cdot)$, which is well-suited for modeling such sequential dependencies.
Given the sequence observed up to time slot $t$, the network outputs a prediction
\begin{equation}\label{outpre}
g_t \triangleq g_\psi(\chi_{0:t}) \approx \mathbb{E}[\hat{y}_H \mid \chi_0,\ldots,\chi_t].
\end{equation}
The network $g_\psi(\cdot)$ is then trained in a supervised manner by minimizing the 
mean-squared error (MSE) between $g_t$ and the truncated-bootstrapped target 
$\hat{y}_H$ in~\eqref{ytarget}.
After training, the LSTM network serves as a return predictor that estimates the expected cumulative return from any observed state-action sequence. 
Given the sequence observed up to time slot $t$, the LSTM network outputs the corresponding prediction $g_t$.
The redistributed reward is then computed as
\begin{equation}\label{redist.rew}
    r'_{t+1} = g_{t} - g_{t-1},
\end{equation}
which measures the increment in the predicted return brought by the current state-action pair. 
Therefore, rewards are reassigned from delayed outcomes to the state-action pairs that contribute to the eventual return, serving as a practical approximation of the theoretical reward redistribution in~\eqref{optrr}.
\subsubsection{Scheduling Policy Optimization}
The redistributed reward provides an immediate signal for learning how each observed system state should be mapped to request assignments. 
Accordingly, we parameterize the scheduling policy $\pi_{\boldsymbol{\theta}}(\boldsymbol{a}\mid\boldsymbol{s})$ using a neural network with parameters $\boldsymbol{\theta}$, which outputs a categorical distribution over the edge servers for each incoming request.
Also, we define a value network parameterized by $\phi$ to estimate the state value function $V_{\phi}(\boldsymbol{s})$.
Following proximal policy optimization (PPO)~\cite{schulman17arxiv}, we optimize these two networks as the actor and critic, respectively, eliminating the need for an explicit system transition model. 
To ensure stable policy updates, a surrogate objective function is utilized to constrain the optimization step by clipping the probability ratio between the new and old policies.
Let 
$\rho_t(\theta) = \pi_{\theta}(\boldsymbol{a}_t|\boldsymbol{s}_t) / \pi_{\theta_{\text{old}}}(\boldsymbol{a}_t|\boldsymbol{s}_t)$ 
denote the probability ratio between the new and old policy.
The surrogate objective function is defined as
\begin{equation}\label{aupdate}
		L^{\text{CLIP}}(\theta) =  \mathbb{E}_t\big[\min  \big(\rho_t(\theta) \hat{A}_t,
		\text{clip} (\rho_t(\theta) , 1 - \varepsilon, 1 + \varepsilon) \hat{A}_t\big)\big],
\end{equation}
where $\varepsilon$ is a clipping hyperparameter. $\hat{A}_t$ represents the advantage function, which quantifies the relative benefit of taking action $\boldsymbol a_t$ over the policy average. 
We apply generalized advantage estimation (GAE), given by $\hat{A}_t = \sum_{l=0}^{\infty} (\gamma\lambda)^l \delta_{t+l}$, where $\delta_{t} = r'_{t+1} + \gamma V_{\phi}(\boldsymbol{s}_{t+1}) - V_{\phi}(\boldsymbol{s}_{t})$ denotes the temporal-difference (TD) error, and $\lambda\in[0,1]$ is a parameter of GAE.
Accordingly, the actor network is updated by maximizing the surrogate objective function $L^{CLIP}(\theta)$ using gradient ascent.

Subsequently, the value network is trained by minimizing the MSE between the estimated state value and target value,
\begin{equation}\label{cupdate}
L^{\text{VF}}(\phi) = \mathbb{E}_t[(V_{\phi}(\boldsymbol{s}_t) - \hat{V}_t)^2],
\end{equation}
where $\hat{V}_t = \hat{A}_t + V_{\phi_{\text{old}}}(\boldsymbol{s}_{t})$ is the target value.
The critic parameters $\phi$ are then updated via gradient descent on $L^{VF}(\phi)$.

\subsection{Algorithm Summary and Theoretical Analysis}
The primary steps of LYREO are summarized in Algorithm~\ref{alg:lyrappo}.
\begin{algorithm}[t]
\caption{LYREO Approach}
\label{alg:lyrappo}
\KwIn{Truncation length $H$; actor learning rate $\xi_A$; critic learning rate $\xi_C$; clipping parameter $\varepsilon$; discount factor $\gamma$; GAE parameter $\lambda$; PPO epochs $\Xi$}
\KwOut{Learned scheduling policy $\pi_\theta$}
Initialize actor network $\pi$, critic network $V$, and return predictor $g$ with random parameters $\theta, \phi, \psi$\;
\While{not converged}{
    Collect a batch of trajectories $\mathfrak{B} = \{(\boldsymbol{s}_t,\boldsymbol{a}_t,r_{t+1},\boldsymbol{s}_{t+1})\}_{t=0}^{H-1}$ using policy $\pi_\theta$\;
    \tcc{\small Reward Redistribution Learning}
    \ForEach{trajectory in $\mathfrak{B}$}{
        Compute learning target $\hat{y}_H$ via~\eqref{ytarget}\;
        Construct training pairs $\{(\chi_{0:t}, \hat{y}_H)\}_{t=0}^{H-1}$\;
    }
    Update return predictor $g_\psi$ by minimizing MSE between $g_\psi(\chi_{0:t})$ and $\hat{y}_H$ over all training pairs\;
    \tcc{\small Scheduling Policy Optimization}
    Freeze return predictor $g_\psi$\;
    \ForEach{trajectory in $\mathfrak{B}$}{
        Compute redistributed reward $r'_{t+1}$ via~\eqref{redist.rew}\;
        Compute TD error $\delta_t$ and advantage $\hat{A}_t$\;
    }
    Set $\theta_{\text{old}} \leftarrow \theta$\;
    \For{epoch $=1$ \KwTo $\Xi$}{
        Update actor parameters $\theta$ by maximizing $L^{\text{CLIP}}(\theta)$ via~\eqref{aupdate} with learning rate $\xi_A$\;
        Update critic parameters $\phi$ by minimizing $L^{\text{VF}}(\phi)$ via~\eqref{cupdate} with learning rate $\xi_C$\;
    }
}
\end{algorithm}
We now establish the theoretical performance guarantee of the proposed LYREO approach.
\begin{theorem}
\label{policy_gap}
Let $\pi^*$ denote an optimal policy of the delayed-reward SDP induced by problem $\mathbf{P2}$.
Let $\hat{\pi}$ denote the optimal policy learned by LYREO under the discount factor $\gamma$ and truncation length $H$.
Assume that the reward function is bounded by $|r_t| \leq r_{\max}$, 
the value estimation error of the critic network is bounded by $|V^{\pi}(\boldsymbol{s}) - V_{\phi}(\boldsymbol{s}) | \leq \epsilon_{\mathrm{v}}$ for any policy $\pi$, 
the conditional-mean regression error of the LSTM-based return predictor is bounded by $|g_t-\mathbb{E}[ \hat{y}_H\mid\chi_{0:t}]|\leq\epsilon_{\mathrm{r}}$,
and the policy optimization error of PPO is bounded by $\epsilon_{\mathrm{PPO}}$.
The performance gap of the learned policy is bounded by
\begin{equation}\label{gap}
    \begin{aligned}
        J(\pi^*) - J(\hat{\pi}) & \leq \frac{4}{1-\gamma} \left( \gamma^H \epsilon_{\mathrm{v}} + \frac{2\gamma^H}{1-\gamma}r_{\max} + \epsilon_{\mathrm{r}} \right) + \epsilon_{\mathrm{PPO}}\\
        & = \mathcal{O}\left( \frac{\gamma^H \epsilon_{\mathrm{v}}+ \epsilon_{\mathrm{r}}}{1 - \gamma} + \frac{\gamma^H r_{\max}}{(1 - \gamma)^2} +\epsilon_{\mathrm{PPO}}\right).
    \end{aligned}
\end{equation}
\end{theorem}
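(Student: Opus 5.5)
The argument is a three-term decomposition of the suboptimality into bias from the return target, bias from the learned redistributed reward, and PPO error. Let $\tilde{\pi}$ be an exact optimizer of the surrogate SDP whose per-step reward is the learned redistributed reward $r'_{t+1}=g_t-g_{t-1}$ in~\eqref{redist.rew}. Let $\bar{r}'_{t+1}$ be the idealized redistribution in~\eqref{optrr} built from the true conditional return, and $J'(\cdot)$ the return under $r'$. Then
\begin{equation*}
J(\pi^*)-J(\hat{\pi}) = \bigl[J(\pi^*)-J(\tilde{\pi})\bigr] + \bigl[J(\tilde{\pi})-J(\hat{\pi})\bigr],
\end{equation*}
where the second bracket is at most $\epsilon_{\mathrm{PPO}}$ by assumption. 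For the first bracket I use the standard sandwich argument. If $\sup_{\pi}|J(\pi)-J'(\pi)|\le \delta$, then
\begin{equation*}
J(\pi^*)-J(\tilde{\pi}) \le \bigl[J(\pi^*)-J'(\pi^*)\bigr] + \bigl[J'(\pi^*)-J'(\tilde{\pi})\bigr] + \bigl[J'(\tilde{\pi})-J(\tilde{\pi})\bigr] \le 2\delta,
\end{equation*}
since the middle term is $\le 0$ by optimality of $\tilde{\pi}$ for $J'$. It therefore suffices to bound $\delta$ uniformly over policies.

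To bound $\delta$, I compare the learned per-step reward with the exact one. Write $\mathbb{E}[\hat{y}_H\mid\chi_{0:t}]$ and split it as
\begin{equation*}
\mathbb{E}[\hat{y}_H\mid\chi_{0:t}] = \mathbb{E}\Bigl[\sum_{h=0}^{\infty}\gamma^h r_{h+1}\Bigm|\chi_{0:t}\Bigr] + b_t .
\end{equation*}
The bias is $b_t=\gamma^H\,\mathbb{E}[V_\phi(\boldsymbol{s}_H)-\sum_{h\ge H}\gamma^{h-H}r_{h+1}\mid\chi_{0:t}]$. Inserting $\pm V^{\pi}(\boldsymbol{s}_H)$ and using the critic assumption together with $|r_t|\le r_{\max}$ gives
\begin{equation*}
|b_t| \le \gamma^H\epsilon_{\mathrm{v}} + \gamma^H\bigl(|V^\pi(\boldsymbol{s}_H)| + \textstyle\sum_{h\ge H}\gamma^{h-H}|r_{h+1}|\bigr) \le \gamma^H\epsilon_{\mathrm{v}} + \tfrac{2\gamma^H}{1-\gamma}r_{\max}.
\end{equation*}
This is the crude bound that produces the factor $2/(1-\gamma)$. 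Combined with the regression assumption,
\begin{equation*}
|g_t-\bar{g}_t|\le e \triangleq \gamma^H\epsilon_{\mathrm{v}}+\tfrac{2\gamma^H}{1-\gamma}r_{\max}+\epsilon_{\mathrm{r}},
\end{equation*}
where $\bar{g}_t$ is the exact conditional expected return. Hence $|r'_{t+1}-\bar{r}'_{t+1}|\le 2e$ for every $t$.

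Next I use return equivalence of the exact redistribution: as argued after~\eqref{optrr}, $\bar{r}'$ has the same expected return as the original delayed rewards under every policy. Summing the per-step gap with discounting gives
\begin{equation*}
\delta \le \sum_{t\ge0}\gamma^t\cdot 2e = \frac{2e}{1-\gamma},
\end{equation*}
and therefore $J(\pi^*)-J(\tilde{\pi})\le 2\delta\le \frac{4e}{1-\gamma}$. Adding $\epsilon_{\mathrm{PPO}}$ yields~\eqref{gap}, and the $\mathcal{O}(\cdot)$ form follows by expanding $e$.

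The main obstacle is the return-equivalence step under discounting. Telescoping $\sum_t\gamma^t(\bar{g}_t-\bar{g}_{t-1})$ does not collapse exactly when $\gamma<1$, and the LSTM target mixes discounting with bootstrapping. I would first handle this by defining $\bar{g}_t$ as the conditional return and treating the redistributed SDP in the undiscounted-telescoping sense, where the sum collapses to $\bar{g}_\infty-\bar{g}_{-1}$ and equals the original return in expectation. I would then absorb any residual discount mismatch into the stated constants, using that $\bar{g}_t$ is bounded by $r_{\max}/(1-\gamma)$ up to $e$. If that absorption fails, I would instead bound the per-step difference only through the $2e$ estimate and accept the looser constant, which still matches the stated order.
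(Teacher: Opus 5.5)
Your error bookkeeping matches the paper's proof step for step: the same crude bias bound $\gamma^H\epsilon_{\mathrm{v}}+\frac{2\gamma^H}{1-\gamma}r_{\max}$, the same per-step bound $2e$ on the redistributed reward, the same discounted accumulation to $2e/(1-\gamma)$ uniformly in $\pi$, and the same three-term sandwich. The gap is the step you flag yourself, and none of your fallbacks closes it.

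You take $J$ to be the original delayed-reward objective. You therefore need $\bar{J}'(\pi)=J(\pi)$ for every $\pi$, where $\bar{J}'$ is the $\gamma$-discounted return of the exact redistribution, and this fails in general. For the comparison with $g_t$ to be uniform in $\pi$, your $\bar{g}_t$ must be one fixed function: the conditional return under the data-collecting policy $\pi_0$, which is what the LSTM regresses onto. Already with Markov rewards one then finds $\mathbb{E}_\pi[\bar{r}'_{t+1}]=\gamma^t\,\mathbb{E}_\pi[A^{\pi_0}(\boldsymbol{s}_t,\boldsymbol{a}_t)]$ for $t\ge 1$. The redistributed reward already carries the discount, so discounting it again gives $\bar{J}'(\pi)=\mathrm{const}+\sum_t\gamma^{2t}\,\mathbb{E}_\pi[A^{\pi_0}(\boldsymbol{s}_t,\boldsymbol{a}_t)]$. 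The performance-difference lemma instead gives $J(\pi)=V^{\pi_0}(\boldsymbol{s}_0)+\sum_t\gamma^{t}\,\mathbb{E}_\pi[A^{\pi_0}(\boldsymbol{s}_t,\boldsymbol{a}_t)]$. The discrepancy depends on $\pi$, so it can move the maximizer, and it carries no factor $\gamma^H$, $\epsilon_{\mathrm{v}}$ or $\epsilon_{\mathrm{r}}$, so it cannot be absorbed into the stated constants.

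Your fallbacks do not repair this:
\begin{itemize}
\item The per-step $2e$ estimate controls $|J'-\bar{J}'|$, not $|\bar{J}'-J|$.
\item Undiscounted telescoping does restore equivalence pathwise. But the surrogate is then no longer the $\gamma$-discounted problem LYREO's PPO solves, and $\sum_t\gamma^t\cdot 2e$ is no longer the right accumulation; you would have to telescope the errors too.
\item A policy-matched $\bar{g}^{\pi}_t$ gives equivalence by the tower property, but then you need $|g_t-\bar{g}^{\pi}_t|\le e$ for all $\pi$ at once. The regression assumption does not provide that.
\end{itemize}

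The paper does not prove such an equivalence; it avoids needing one. It fixes the ideal redistribution $r'^*$ built from the true return under $\pi^*$. It then takes the SDP with reward $r'^*$ as the reference objective, on the strength of RUDDER's optimality result, and bounds $J_{r'^*}(\pi^*)-J_{r'^*}(\hat{\pi})$. Under that reading your $\bar{J}'$ simply is $J$, the problematic step disappears, and your argument becomes the paper's proof. You should then state explicitly that this is where the identification with the delayed-reward objective is assumed rather than derived.

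A smaller point: you charge the PPO error in the true objective, $J(\tilde{\pi})-J(\hat{\pi})\le\epsilon_{\mathrm{PPO}}$. The paper charges it in the surrogate PPO actually optimizes, $V^{\pi^*}_{r'}-V^{\hat{\pi}}_{r'}\le\epsilon_{\mathrm{PPO}}$. With that reading you can apply your sandwich directly to $\hat{\pi}$: $J(\pi^*)-J(\hat{\pi})\le[J(\pi^*)-J'(\pi^*)]+[J'(\pi^*)-J'(\hat{\pi})]+[J'(\hat{\pi})-J(\hat{\pi})]\le 2\delta+\epsilon_{\mathrm{PPO}}$, so $\tilde{\pi}$ is unnecessary.
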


\begin{proof}
Let $r'^*$ denote the ideal untruncated redistributed reward derived from the true expected return under the optimal policy $\pi^*$. 
Based on the optimality of the second-order Markov reward redistribution established in~\cite{arjona19neurips}, we consider the SDP guided by $r'^*$ as the ideal objective that provides immediate step-wise reward signals.  
The performance gap introduced by LYREO arises from approximating these ideal signals via sequence truncation and value bootstrapping in policy learning.
We establish the bound in the following four steps.

\textbf{Step I: Bounding the Return Prediction Error.} 
Let $y_{\infty} = \sum_{h=0}^{\infty} \gamma^h r_{h+1}$ denote the true infinite-horizon cumulative return, and $\hat{y}_H$ denote the approximated return constructed via truncation and bootstrapping, given by~\eqref{ytarget}. The estimation error can be derived as 
\begin{equation}
    \begin{aligned}
        |y_{\infty} - \hat{y}_H| 
        &= \left| \sum_{h=H}^{\infty} \gamma^h r_{h+1} - \gamma^H V_{\phi}(\boldsymbol{s}_{H}) \right|.
    \end{aligned}
\end{equation}
By adding and subtracting the same term $\gamma^{H}V^{\pi}(\boldsymbol{s}_{H})$ and applying the triangle inequality, we obtain
\begin{equation}
    \begin{aligned}
        |y_{\infty} - \hat{y}_H| & \leq \gamma^H |V^{\pi}(\boldsymbol{s}_{H}) - V_{\phi}(\boldsymbol{s}_{H})| \\
        & \quad + \left| \sum_{h=H}^{\infty} \gamma^h r_{h+1} - \gamma^H V^{\pi}(\boldsymbol{s}_{H}) \right|.
    \end{aligned}
\end{equation}
By assumption, the first term is bounded by $\gamma^H \epsilon_{\text{v}}$. 
For the second term, by the definition of the value function, multiplying both sides by $\gamma^H$ and re-indexing the summation, we have 
$\gamma^H V^{\pi}(\boldsymbol{s}_{H}) = \mathbb{E}\left[ \sum_{h=H}^{\infty} \gamma^h r_{h+1} \mid \boldsymbol{s}_{H}\right].$
Since the instantaneous reward is bounded by $|r_t| \leq r_{\max}$, the absolute deviation between any realization of the truncated return and its expectation is bounded by $2\gamma^H r_{\max}/(1-\gamma)$. 
Combining the two bounds, the return prediction error satisfies
\begin{equation}
    |y_{\infty} - \hat{y}_H| \leq \gamma^H \epsilon_{\text{v}} + \frac{2\gamma^H}{1-\gamma}r_{\max} \triangleq \Pi.
\end{equation}

\textbf{Step II: Bounding the Redistributed Reward Error.} 
The LSTM network predicts the expected return conditioned on the state-action sequence $\chi_{0:t}$. 
Let $g^*_t = \mathbb{E}[y_{\infty} \mid \chi_{0:t}]$ and $\hat{g}_t = \mathbb{E}[\hat{y}_H \mid \chi_{0:t}]$ denote the ideal prediction under the true return and the ideal conditional expectation of the truncated-bootstrapped target, respectively.
Since $|y_{\infty} - \hat{y}_H| \leq \Pi$, Jensen's inequality implies 
\begin{equation}\label{44}
|g^*_t- \hat{g}_t| \leq \left|\mathbb E[y_{\infty}-\hat y_H\mid\chi_{0:t}]\right| \le\mathbb E\!\left[|y_{\infty}-\hat y_H|\mid\chi_{0:t}\right]\le\Pi.
\end{equation}
By assumption, the regression error between $g_t$, trained via~\eqref{outpre}, and the conditional expectation $\hat{g}_t$ is bounded by  $\epsilon_{\text{r}}$.
Combining this with~\eqref{44} via the triangle inequality,
\begin{equation}
    |g^*_t- g_t|\le |g^*_t-\hat{g}_t| + |\hat{g}_t - g_t| \le \Pi + \epsilon_{\text{r}}.
\end{equation}
Recalling the redistributed reward defined in~\eqref{redist.rew}, the error in the redistributed reward is bounded by
\begin{equation}
         |r'^*_{t+1}- r'_{t+1}| \leq |g^*_t - g_t | + |g^*_{t-1} - g_{t-1} | \leq 2(\Pi+\epsilon_{\text{r}}),
\end{equation}
where $r'^*_{t+1}$ and $r'_{t+1}$ are the ideal redistributed reward and its approximation obtained via the proposed truncation and bootstrapping scheme, respectively.

\textbf{Step III: Bounding the Value Function Error.} For any policy $\pi$, the difference between its value functions evaluated under the ideal and approximated redistributed rewards can be accumulated over the infinite horizon as
\begin{equation}
    \begin{aligned}
        |V^\pi_{r'^*}(\boldsymbol{s}) - V^\pi_{r'}(\boldsymbol{s})| &= \left| \mathbb{E}_\pi \left[ \sum_{t=0}^{\infty} \gamma^t (r'^*_{t+1} -r'_{t+1} ) \right] \right| \\ 
        &\leq \sum_{t=0}^{\infty} \gamma^t 2(\Pi+\epsilon_{\text{r}}) = \frac{2(\Pi+\epsilon_{\text{r}})}{1-\gamma}.
    \end{aligned}
\end{equation}

\textbf{Step IV: Deriving the Policy Gap.} We now evaluate the suboptimality of policy $\hat{\pi}_{r'}$. 
Note that we have $V^{\pi^*}_{r'} - V^{\hat{\pi}_{r'}}_{r'} \leq \epsilon_{\text{PPO}}$, as $\hat{\pi}_{r'}$ approximates the maximizer of the value function under $r'$ with a bounded optimization error $\epsilon_{\text{PPO}}$.
Leveraging the standard value difference decomposition then gives
\begin{equation}
    \begin{aligned}
        & J_{r'^*}(\pi^*) - J_{r'^*}(\hat{\pi}_{r'}) \\
        \leq \ & |V^{\pi^*}_{r'^*} - V^{\pi^*}_{r'}| + \epsilon_{\text{PPO}} + |V^{\hat{\pi}_{r'}}_{r'} - V^{\hat{\pi}_{r'}}_{r'^*}|   \\
        \leq \ & \frac{2(\Pi+\epsilon_{\text{r}})}{1-\gamma} + \epsilon_{\text{PPO}} + \frac{2(\Pi+\epsilon_{\text{r}})}{1-\gamma} = \frac{4(\Pi+\epsilon_{\text{r}})}{1-\gamma}+ \epsilon_{\text{PPO}}.
    \end{aligned}
\end{equation}

Substituting the definition of $\Pi$ into the preceding inequality yields~\eqref{gap},
which completes the proof.
\end{proof}

\section{Performance Evaluation}\label{eval}
\subsection{Simulation Settings}
We consider a system consisting of heterogeneous edge servers. Each edge server deploys the Llama-3.2-1B model and is equipped with one of three GPU platforms: NVIDIA GeForce RTX 3080, RTX 3090, and RTX 4090.
Following the hardware specifications of these GPUs, the available GPU memory capacities are set to $M_s \in \{10, 20, 24\}$~GB, respectively. 
The prefill rate $v_s^{\text{pre}} \in \{2500, 4000, 8000\}$~tokens/s and the decoding throughput $v_s^{\text{dec}}(\cdot)$ are obtained via offline profiling of the Llama-3.2 model using a vLLM-based serving engine on the corresponding GPU platforms.
Specifically, $v_s^{\text{dec}}(\cdot)$ is modeled as a non-increasing function of the instantaneous KV cache occupation to reflect the memory-bound nature of the decoding phase.
Inference requests are drawn from the LMSYS-Chat-1M dataset~\cite{zheng24arxiv}, with the input and output token lengths
averaging $70$ and $215$ tokens, respectively. The KV cache memory occupation per token is set to $\alpha=16$~KB, while the transmission size of a single token is $\beta=16$~bits.
We set $T=800$ time slots, slot duration $\Delta=1$~s, and load-balancing tolerance $\epsilon=0.01$.
User locations evolve randomly within a $200 \times 200$~$\text{m}^2$ service area in each time slot. 
\begin{table}[!t]
\centering
\caption{Simulation Parameters}
\label{params}
\footnotesize
\setlength{\tabcolsep}{3pt}
\renewcommand{\arraystretch}{1.15}
\begin{tabular}{@{}ll | ll@{}}
\toprule
\textbf{Parameter} & \textbf{Value} & \textbf{Parameter} & \textbf{Value} \\
\midrule
Bandwidth $B_s$ & $[6,14]$\,MHz & Clipping parameter $\varepsilon$ & $0.2$ \\
Transmission power $p_i$ & $[15,25]$\,dBm & Learning rate & $3\times10^{-4}$ \\
Noise PSD $N_{0,s}$ & $-174$\,dBm/Hz & PPO training epochs & $10$ \\
Discount factor $\gamma$ & $0.98$ & Random seeds & $5$ \\
\bottomrule
\end{tabular}
\vspace{-0.1in}
\end{table}

During training, none of the algorithms has access to the complete trajectory. Instead, policy optimization relies only on the currently observed trajectory, consistent with the infinite-horizon formulation. After the $T$-th time slot, all unfinished inference requests continue execution until completion, and the rewards are accumulated and assigned to the final step. The neural networks are implemented using PyTorch~1.8. Both the actor and critic networks in the PPO architecture consist of four fully connected layers. The return predictor for reward redistribution is implemented as a two-layer LSTM with a hidden dimension of 64. The remaining parameters are summarized in Table~\ref{params}, following widely adopted settings from the literature~\cite{zhang25twc, mou26tsc}.

\subsection{Baseline Approaches}
To comprehensively evaluate the proposed LYREO approach, we consider three categories of baseline schemes.

First, two DRL-based baselines are implemented: the Lyapunov-assisted PPO algorithm (\textbf{Ly-PPO})~\cite{younesi25ucc}, which integrates Lyapunov optimization with PPO but excludes the proposed reward redistribution framework, and the conventional PPO algorithm (\textbf{PPO})~\cite{qiu25fcn}, which optimizes latency with the load-balancing constraint violation incorporated as a penalty term, without Lyapunov-based constraint transformation or reward redistribution framework.
Second, to validate the effectiveness of the DEED architecture, we implement a static batching scheme (\textbf{Static})~\cite{li25lcn}, where inference requests are grouped into fixed batches for decoding: newly arrived requests are not admitted into an ongoing decoding batch, and completed requests remain until the entire batch finishes decoding.
Finally, two heuristic scheduling strategies are included as classical non-learning baselines: 
Least-Loaded (\textbf{LL}), which schedules each request to the edge server with the lowest current workload, and
\textbf{Random}, which schedules each request to a randomly selected edge server.

\subsection{Performance Comparison and Analysis}

\subsubsection{Convergence Performance}
\begin{figure}[!t]
	\centering
	\includegraphics[width=1.01\linewidth]{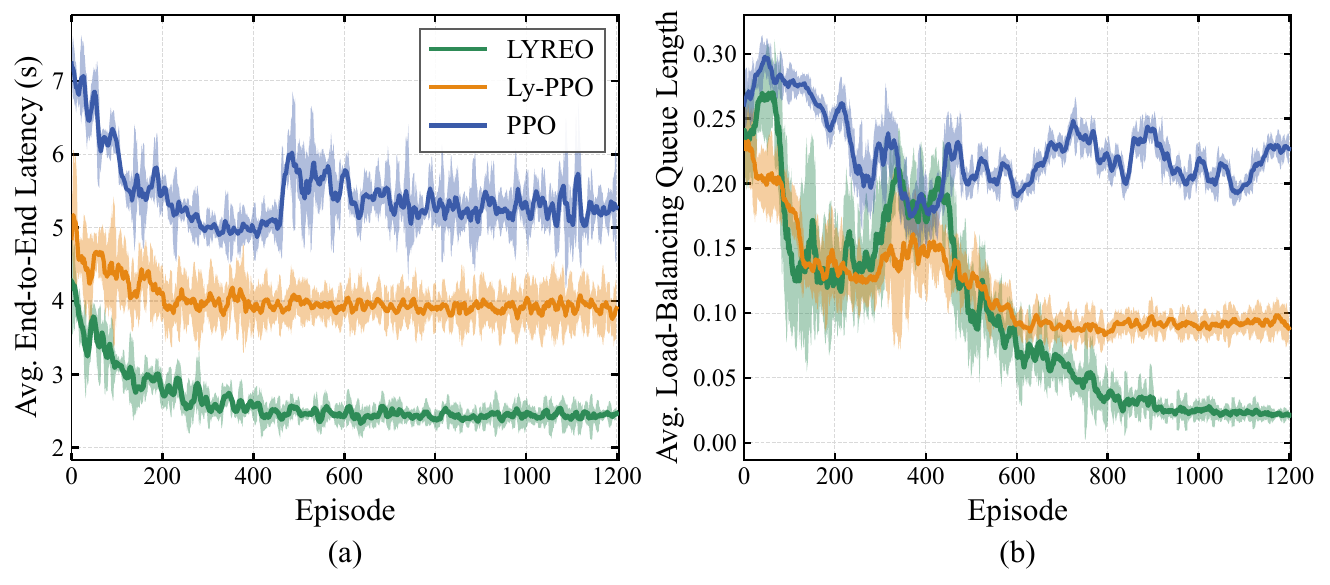}
	\caption{Convergence performance of LYREO, Ly-PPO, and PPO approaches. 
	}
	\label{converg}
    \vspace{-0.1in}
\end{figure}

Fig.~\ref{converg} presents the convergence behavior of the three learning-based approaches regarding average end-to-end latency and load-balancing queue length. 
Solid curves and shaded regions represent the mean and min-max range across seeds, respectively. 
As shown in Fig.~\ref{converg}(a), LYREO achieves the lowest end-to-end latency throughout the training process and converges to a stable value of approximately $2.4$~s. In comparison, Ly-PPO stabilizes at around $3.9$~s, while conventional PPO converges to a substantially higher latency of approximately $5.2$~s.
Fig.~\ref{converg}(b) shows a consistent performance advantage in load-balancing control. After convergence, LYREO reduces the average load-balancing queue length to approximately $0.02$, compared with about $0.09$ for Ly-PPO and $0.22$ for PPO (computed post hoc). 
Moreover, the two Lyapunov-guided approaches maintain relatively stable queue levels, whereas PPO remains at a higher and more variable plateau.
The improvement of LYREO over Ly-PPO highlights the benefit of reward redistribution. By attributing delayed inference outcomes to the scheduling decisions responsible for them, LYREO provides more informative temporal feedback for policy learning, leading to the best overall convergence performance.
This benefit extends beyond latency: since load balancing is incorporated into the drift-plus-penalty function, the same mechanism also helps the policy associate accumulated imbalance with the scheduling decisions, improving the load-balancing component.
Meanwhile, the gap between PPO and the two Lyapunov-guided approaches demonstrates the advantage of incorporating the Lyapunov framework into policy learning. Although PPO also penalizes excessive workload deviation, its balancing pressure does not adapt to the accumulated deviation over time. 
In contrast, the Lyapunov virtual queue provides an adaptive corrective signal, resulting in lower and more stable load-balancing queue lengths.\looseness=-1

\subsubsection{Distribution Comparison}
\begin{figure}[!tb]
	\centering
	\includegraphics[width=1.01\linewidth]{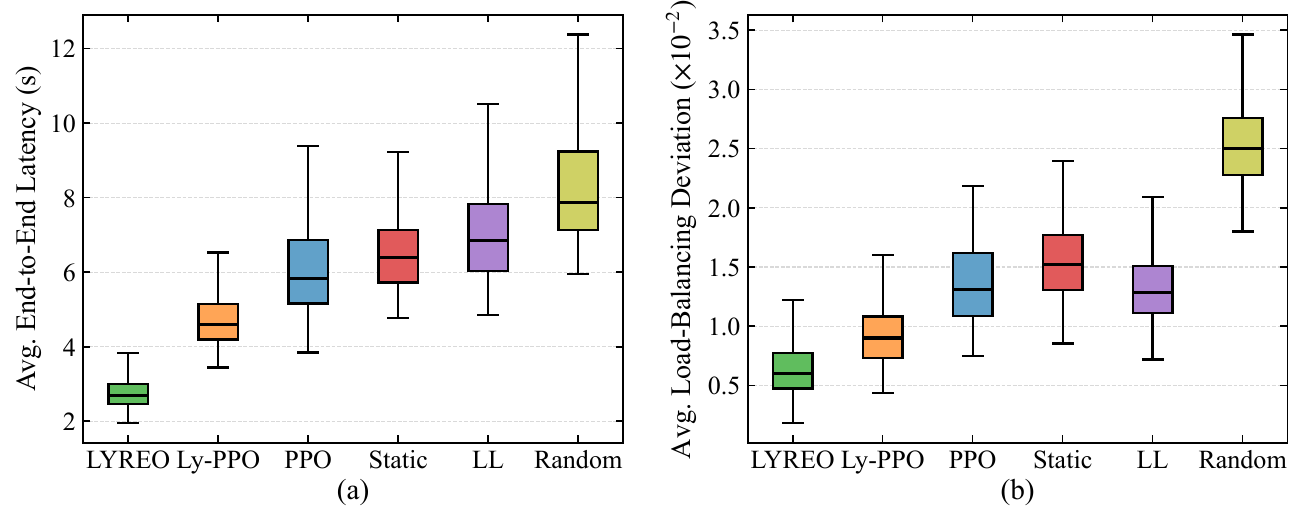}
    \caption{Boxplot comparison of different approaches.}
	\label{box}
\end{figure}

Fig.~\ref{box} examines the distributions of the two performance metrics
over the evaluation runs. 
In Fig.~\ref{box}(a), LYREO achieves the lowest median end-to-end latency and the most compact distribution, followed by Ly-PPO, whereas PPO, Static, LL, and Random exhibit higher medians and longer upper tails. 
Fig.~\ref{box}(b) shows that LYREO and Ly-PPO maintain lower load-balancing deviations than the other baselines. 
Overall, the consistent advantages of LYREO and Ly-PPO demonstrate the efficacy of handling load-balancing constraints through adaptive Lyapunov guidance.
On the other hand, PPO outperforms most non-learning baselines across the two metrics, indicating that the penalty term enables it to learn a meaningful trade-off between latency and load balancing.
However, treating load balancing as a fixed penalty limits its ability to adapt to time-varying server imbalance. Also, the delayed feedback associated with the penalty causes PPO to react only after requests have accumulated on particular edge servers, leading to worse performance than LYREO and Ly-PPO.
Among the non-learning baselines, Static achieves lower latency than LL, whereas LL provides better load balancing. Random performs worst on both metrics, exhibiting the highest medians and the widest distributions.

\subsection{Load-Balancing Ablation}
This subsection evaluates the necessity of long-term load balancing and validates the KV cache memory-time consumption workload through two variants of LYREO for the ablation study. 
Specifically, \textbf{LYREO-Count} replaces the proposed KV cache memory-time consumption with the number of active requests, while \textbf{LYREO-NoLB} removes the long-term load-balancing constraint.
Beyond the two primary optimization objectives, we introduce two additional evaluation metrics: the P99 end-to-end latency, defined as the $99$th percentile of request-level latency, and the high-KV ratio, defined as the percentage of time slots across all edge servers where the peak KV cache utilization exceeds $90\%$ of the reserved memory capacity limit. 
These two metrics, which are not directly optimized by the considered approaches, respectively characterize the worst-case, user-perceived latency that governs QoS satisfaction and the frequency of near-limit KV cache operation in edge servers, providing complementary evidence for the system-level performance of scheduling approaches.

As shown in Table~\ref{tab:lb_comparison}, LYREO-NoLB achieves the lowest average end-to-end latency, since it is free to concentrate inference requests on the servers offering the lowest latency. However, this latency advantage is accompanied by the cost of less balanced resource utilization.
Without the load-balancing mechanism, request concentration drives the average load-balancing deviation up by $125.0\%$ relative to LYREO, while the high-KV ratio rises to $14.35\%$, a $123.5\%$ relative increase. More importantly, this imbalance translates into substantially worse tail behavior, with the P99 end-to-end latency reaching $6.73$~s, $79.9\%$ higher than that of LYREO.
These results show that latency-oriented scheduling alone cannot prevent persistent workload concentration. Explicit long-term load balancing instead preserves KV cache headroom across edge servers and mitigates the tail-latency degradation, at the expected cost of higher average latency.
Meanwhile, LYREO-Count consistently underperforms LYREO across all four metrics, increasing the average latency to $2.79$~s, the P99 latency to $4.56$~s, the average load-balancing deviation to $0.011$, and the high-KV slot ratio to $8.04\%$.
This indicates that conventional count-based balancing does not balance the resource pressure imposed on heterogeneous edge servers, since inference requests differ in their KV cache occupation and duration. By jointly capturing occupation and residence time, the KV cache memory-time consumption provides a more reliable workload signal, validating its design for long-term load-balancing.
\begin{table}[t]
\caption{Load-Balancing and Workload Metrics Ablation}
\label{tab:lb_comparison}
\centering
\scriptsize
\setlength{\tabcolsep}{1.2pt}
\renewcommand{\arraystretch}{1.12}
\begin{tabular}{@{}lcccc@{}}
\toprule
\shortstack[b]{\textbf{Approach} \\ ~}
& \shortstack{\textbf{Avg. End-to-End}\\ \textbf{Latency (s)}}
& \shortstack{\textbf{Avg. Load-Balancing}\\ \textbf{Deviation ($\times 10^{-2}$)}}
& \shortstack{\textbf{P99 End-to-End}\\ \textbf{Latency (s)}}
& \shortstack{\textbf{High-KV}\\ \textbf{Ratio (\%)}} \\
\midrule
LYREO
& \shortstack{$2.63$\\ {\tiny (Ref.)}}
& \shortstack{$\mathbf{0.8}$\\ {\tiny (Ref.)}}
& \shortstack{$\mathbf{3.74}$\\ {\tiny (Ref.)}}
& \shortstack{$\mathbf{6.42}$\\ {\tiny (Ref.)}} \\
\addlinespace[2pt]
LYREO-Count
& \shortstack{$2.79$\\ {\tiny ($+6.1\%$)}}
& \shortstack{$1.1$\\ {\tiny ($+37.5\%$)}}
& \shortstack{$4.56$\\ {\tiny ($+21.9\%$)}}
& \shortstack{$8.04$\\ {\tiny ($+25.2\%$)}} \\
\addlinespace[2pt]
LYREO-NoLB
& \shortstack{$\mathbf{2.02}$\\ {\tiny ($-23.2\%$)}}
& \shortstack{$1.8$\\ {\tiny ($+125.0\%$)}}
& \shortstack{$6.73$\\ {\tiny ($+79.9\%$)}}
& \shortstack{$14.35$\\ {\tiny ($+123.5\%$)}} \\
\bottomrule
\end{tabular}
\vspace{-0.1in}
\end{table}

\subsubsection{Impact of System Parameters}
Fig.~\ref{fig:Q} presents the impact of the number of users on algorithm performance, with the number of edge servers fixed at $S = 8$. Since each user generates one request per time slot in our system, $U$ directly determines the inference request intensity within each time slot.
Fig.~\ref{fig:Q}(a) illustrates that the average end-to-end latency of all methods increases as $U$ grows from $20$ to $60$, since more users generate more requests within each time slot, which share the limited wireless bandwidth and result in larger prefill and decoding batches.
The latency of LYREO increases from about $2.3$~s to $3.3$~s, while the second-best Ly-PPO rises from about $2.9$~s to $5.6$~s.
On average, LYREO therefore reduces latency by roughly $33\%$ relative to Ly-PPO, with an even larger gap relative to PPO, Static, LL, and Random.
Fig.~\ref{fig:Q}(b) shows that heavier traffic also increases the server-averaged load-balancing deviation of all schemes. Nevertheless, LYREO keeps the deviation below approximately $0.009$ for the entire tested range, satisfying the predefined tolerance on average.
Overall, LYREO consistently achieves the best performance across both metrics, and its advantage widens significantly under high request intensity.
This is because heavier request intensity prolongs request completion, leading to more delayed rewards, a setting in which reward redistribution is particularly effective and consequently amplifies the advantage of LYREO over Ly-PPO.
Meanwhile, higher intensity pushes the system closer to its capacity limit, where minor allocation imbalances can rapidly accumulate into substantial workload disparities.
The Lyapunov virtual queue resolves this by proactively regulating workload imbalance, allowing LYREO to yield its advantage over LL, PPO, and other baselines.

\begin{figure}[!tb]
	\centering
	\includegraphics[width=1.01\linewidth]{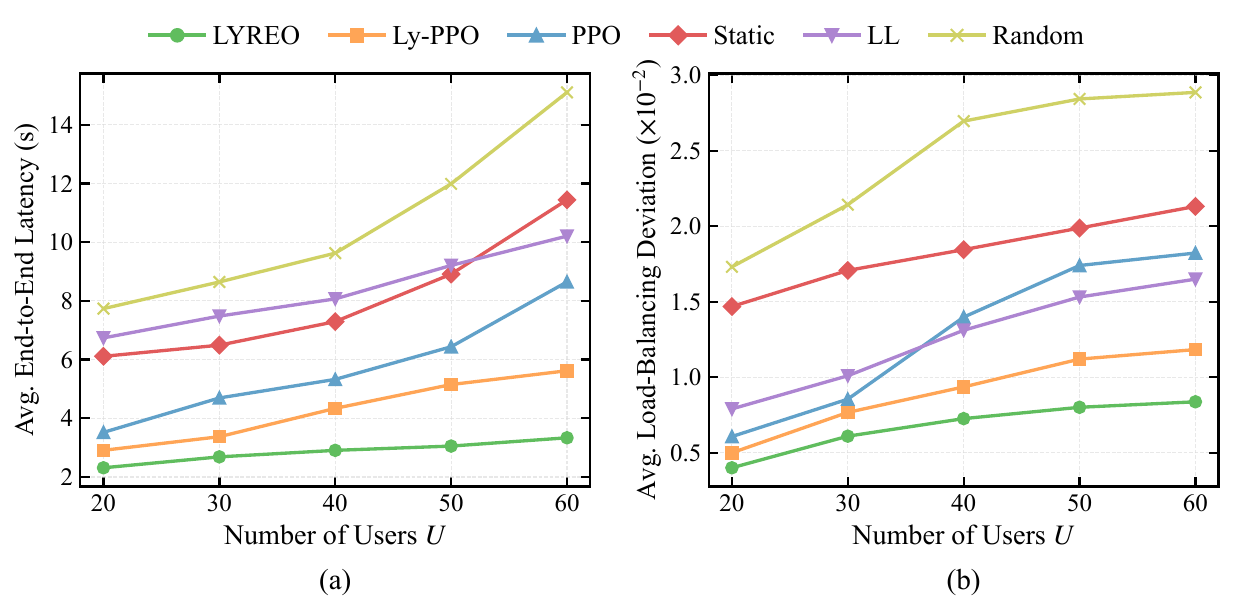}
    \caption{Performance comparison under varying numbers of users.}
	\label{fig:Q}
    \vspace{-0.1in}
\end{figure}
\begin{figure}[!tb]
	\centering
    \includegraphics[width=1.01\linewidth]{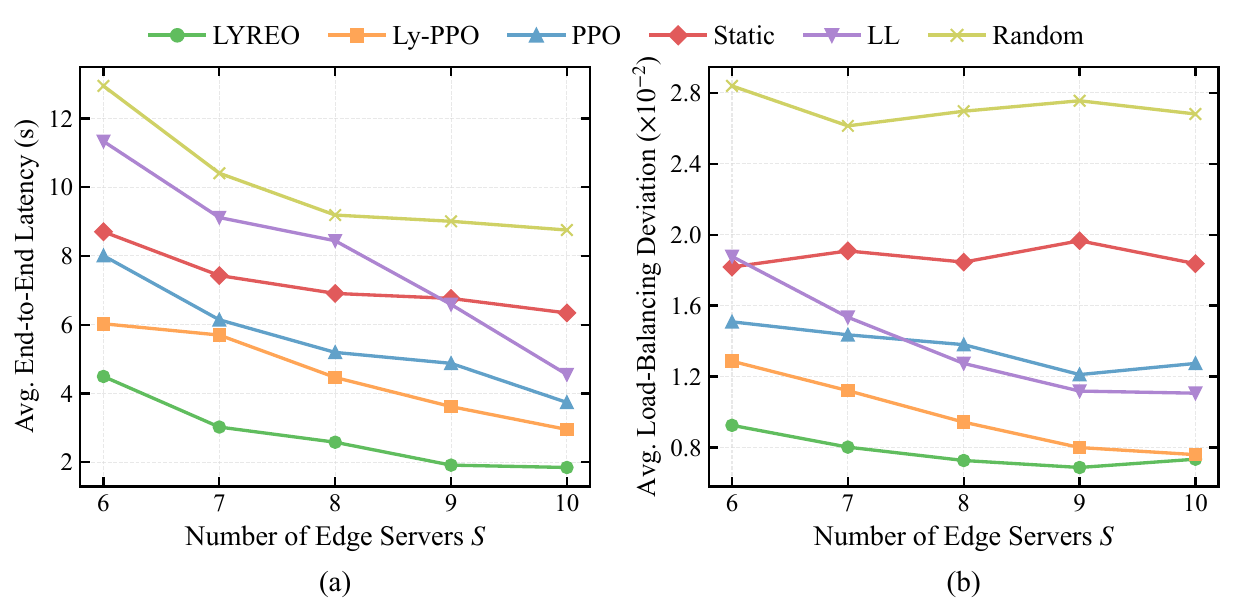}
    \caption{Performance comparison under varying numbers of edge servers.}
	\label{fig:S}
    \vspace{-0.1in}
\end{figure}
Fig.~\ref{fig:S} compares system performance with respect to the number of edge servers, with the number of users fixed at $U=40$.
Increasing $S$ provides more transmission bandwidth, computation capacity, and KV cache capacity, so all algorithms achieve lower latency.
From Fig.~\ref{fig:S}(a), LYREO consistently achieves the best latency performance across every server scale, followed by Ly-PPO and PPO.
Among the non-learning methods, LL improves markedly as the number of servers increases, since least-loaded routing can select from a large pool of lightly occupied servers. Static shows only modest improvement, as its fixed batching strategy limits the benefit of additional servers, while Random remains the worst throughout, owing to its state-agnostic assignment.
The load-balancing deviations in Fig.~\ref{fig:S}(b) show no clear monotonic trend as the number of edge servers increases. 
Specifically, the learning-based approaches generally decrease at first and then flatten or slightly rebound. With a fixed number of requests, more edge servers mitigate workload concentration but also result in sparser per-server demand and a larger action space, making effective scheduling increasingly difficult.
Nevertheless, LYREO maintains the lowest deviation across all settings, showing its stronger ability to exploit additional server choices while preserving system-wide load balance.

\begin{figure*}[!t]
    \centering
    \begin{minipage}[t]{0.31\textwidth}
        \centering
        \includegraphics[width=0.9\linewidth]{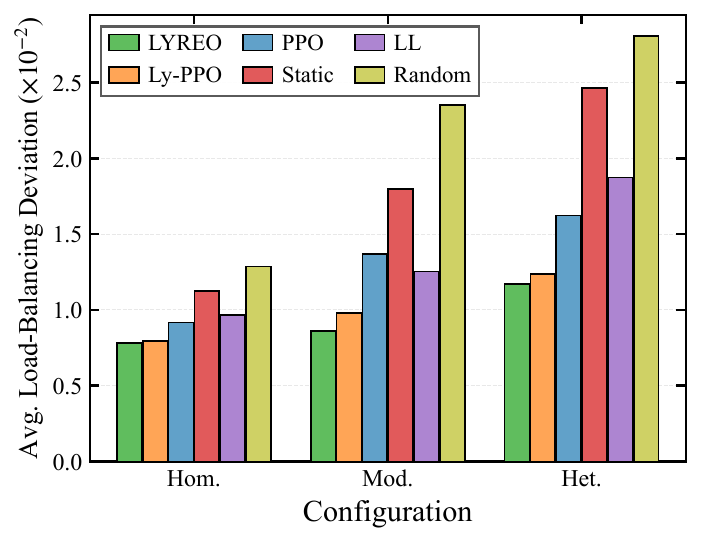}
        \caption{Performance comparison of load-balancing deviation under varying heterogeneity levels.}
        \label{fig:heter}
    \end{minipage}
    \hfill
    \begin{minipage}[t]{0.33\textwidth}
        \centering
        \includegraphics[width=0.92\linewidth]{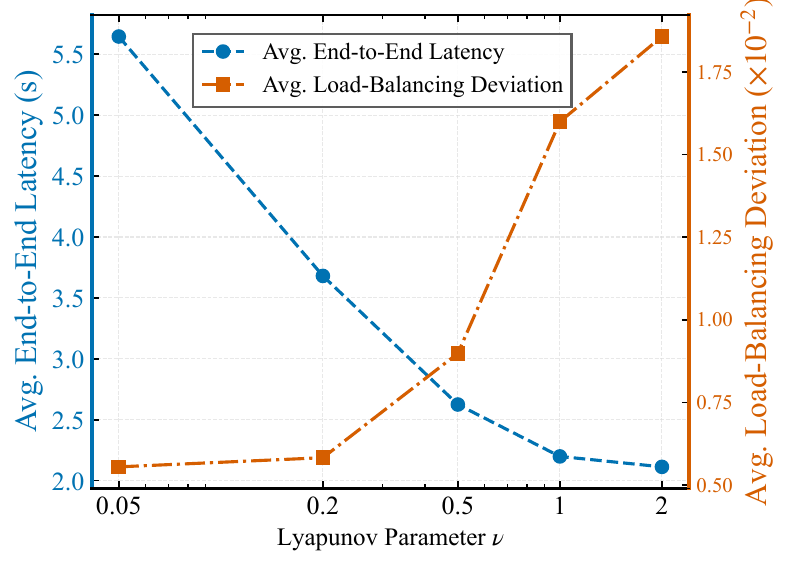}
        \caption{Impact of Lyapunov parameter on system performance.}
        \label{fig:v}
    \end{minipage}
    \hfill
    \begin{minipage}[t]{0.31\textwidth}
        \centering
        \includegraphics[width=0.9\linewidth]{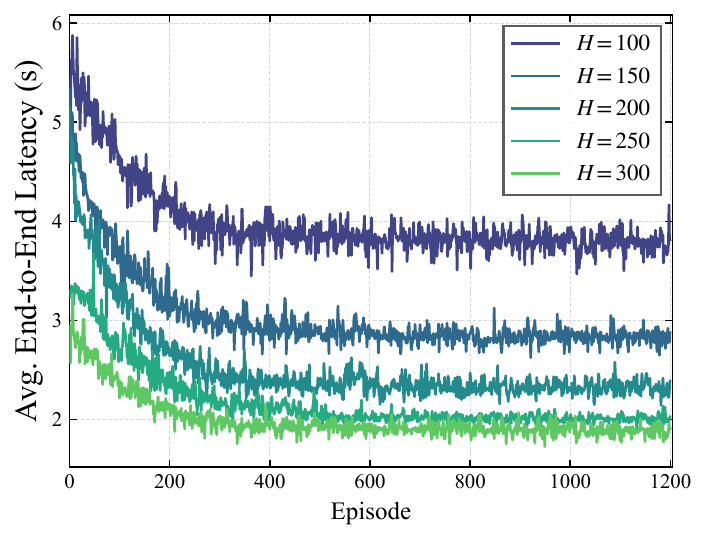}
        \caption{Impact of truncation length on training performance.}
        \label{fig:H}
    \end{minipage}
    \vspace{-0.1in}
\end{figure*}

Fig.~\ref{fig:heter} evaluates the impact of hardware heterogeneity under three distinct configurations:
Homogeneous (Hom.) uses $8\times$ RTX~3090 GPUs; Moderate (Mod.) uses $3\times$ RTX~3080, $3\times$ RTX~3090, and $2\times$ RTX~4090 GPUs; and Heterogeneous (Het.) uses $4\times$ RTX~3080, $1\times$ RTX~3090, and $3\times$ RTX~4090 GPUs.
Average load-balancing deviation increases from Hom. to Mod. and Het. for every method, as growing heterogeneity in computational capacity across edge servers makes balanced workloads harder to maintain.
Under the Hom. configuration, the three learning-based methods achieve the lowest deviations.
As heterogeneity increases, LYREO and Ly-PPO remain the two best-performing methods.
LYREO consistently achieves the lowest deviation and remains slightly better than Ly-PPO, a small yet consistent advantage showing that reward redistribution improves not only latency but also load-balancing performance.
LL performs best among the non-learning methods, a predictable outcome given that its routing is explicitly designed to correct current imbalances, thus providing a natural advantage in the deviation dimension.
Static and Random perform worst under almost all configurations, as fixed batching and state-agnostic routing fail to adapt effectively to server heterogeneity.

\subsection{Hyperparameter Sensitivity}
Fig.~\ref{fig:v} illustrates the latency-balancing trade-off of LYREO controlled by the Lyapunov parameter $\nu$.
Increasing $\nu$ from $0.05$ to $2$ places greater weight on the latency term in the drift-plus-penalty objective. Accordingly, end-to-end latency falls from about $5.6$~s to $2.1$~s, with most of the gain obtained by $\nu=0.5$, while the corresponding load-balancing deviation rises slowly at first and then increases from approximately $0.008$ at $\nu=0.5$ to $0.016$ and $0.019$ at $\nu=1$ and $2$, respectively.
As expected, latency minimization comes at the cost of virtual-queue stabilization.
Notably, $\nu=0.5$ marks a turning point in the trade-off: latency improvements become markedly less responsive to further increases in $\nu$, whereas the load-balancing deviation grows increasingly sensitive.

Finally, Fig.~\ref{fig:H} shows the effect of the return predictor's truncation length on LYREO's average end-to-end latency.
While all configurations converge, a longer $H$ consistently yields a lower final latency.
A larger $H$ exposes the return predictor to more of the actual delayed reward, reducing its reliance on the bootstrapped estimation and yielding more accurate value predictions. This behavior matches our theoretical analysis of LYREO, where the policy performance gap shrinks as $H$ increases.
However, the improvement becomes marginal beyond $H=250$, suggesting that most decision-relevant dependencies in the system can already be captured within a bounded horizon. Meanwhile, a longer truncation length extends the LSTM-based predictor's recurrent trajectories, increasing training complexity.
Consistent with this, $H = 300$ converges to a latency level close to that of $H=250$ but exhibits larger fluctuations, indicating diminishing benefits and increased training instability.



\section{Conclusion}\label{concl}
This paper studied LLM inference request scheduling for agentic AI services, aiming to minimize long-term average end-to-end latency while maintaining load balance across heterogeneous edge servers. 
To capture fine-grained LLM serving processes, we developed a system model that jointly characterizes wireless transmission, multi-stage inference, and KV cache evolution throughout each request's lifecycle. 
Based on this model, we quantified edge server workload using the normalized KV cache memory-time consumption and formulated a long-term load-balancing constraint. 
We proposed the LYREO approach, which converts accumulated load imbalance into per-slot scheduling guidance and redistributes delayed outcomes to their responsible decisions, with sequence truncation and value bootstrapping enabling return estimation in the infinite-horizon setting.
Extensive evaluations demonstrated that LYREO consistently reduces both end-to-end latency and load-balancing deviation, outperforming existing baselines.

\bibliographystyle{IEEEtran}
\bibliography{modified}

@ARTICLE{zhang25twc,
  author={Zhang, Xinyuan and others},
  journal={IEEE Trans. Wireless Commun.},
  title={Beyond the Cloud: Edge Inference for Generative Large Language Models in Wireless Networks},
  year={2025},
  volume={24},
  number={1},
  pages={643-658}
}

@ARTICLE{li23tvt,
  author={Li, Zhen and Yang, Chao and Huang, Xumin and Zeng, WeiLiang and Xie, Shengli},
  journal={IEEE Trans. Veh. Technol.},
  title={CoOR: Collaborative Task Offloading and Service Caching Replacement for Vehicular Edge Computing Networks},
  year={2023},
  volume={72},
  number={7},
  pages={9676-9681}
}

@INPROCEEDINGS{li25lcn,
  author={Li, Tan and Gong, Yanming},
  booktitle={Proc. IEEE 50th Conf. Local Comput. Netw. (LCN)},
  title={Two-Sided Matching for Batch-Aware LLM Request Scheduling in Edge Networks},
  year={2025},
  volume={},
  number={},
  pages={1-7}
}

@ARTICLE{huang25iotj,
  author={Huang, Hualong and others},
  journal={IEEE Internet Things J.},
  title={Dynamic Model Deployment, Batch Scheduling, and Resource Allocation in MLLM-Enabled Edge–Cloud Networks: A Multiagent Two-Timescale DRL Approach},
  year={2025},
  volume={12},
  number={23},
  pages={50818-50835}
}

@inproceedings{kwon23sosp,
  author={Kwon, Woosuk and others},
  title={Efficient Memory Management for Large Language Model Serving with PagedAttention},
  year={2023},
  booktitle={Proc. 29th Symp. Operating Syst. Princ. (SOSP)},
  pages={611–626},
  numpages={16},
  location={Koblenz, Germany}
}

@ARTICLE{zheng26tmc,
  author={Zheng, Tong and others},
  journal={IEEE Trans. Mobile Comput.},
  title={Joint Optimization of Dynamic Batching and Adaptive Partitioning for Distributed LLMs Inference in Mobile Edge Computing},
  year={2026},
  volume={25},
  number={6},
  pages={8747-8763}
}

@article{cheng24arxiv,
  title={Slice-level scheduling for high throughput and load balanced llm serving},
  author={Cheng, Ke and others},
  journal={arXiv preprint arXiv:2406.13511},
  year={2024}
}

@INPROCEEDINGS{li24hpec,
  author={Li, Baolin and Jiang, Yankai and Gadepally, Vijay and Tiwari, Devesh},
  booktitle={Proc. IEEE High Perform. Extreme Comput. Conf. (HPEC)},
  title={LLM Inference Serving: Survey of Recent Advances and Opportunities},
  year={2024},
  volume={},
  number={},
  pages={1-8}
}

@inproceedings{yu22osdi,
  title={Orca: A distributed serving system for Transformer-Based generative models},
  author={Yu, Gyeong-In and Jeong, Joo Seong and Kim, Geon-Woo and Kim, Soojeong and Chun, Byung-Gon},
  booktitle={Proc. 16th USENIX Symp. Oper. Syst. Design Implementation (OSDI)},
  pages={521--538},
  year={2022}
}

@book{neely10morgan,
  title={Stochastic network optimization with application to communication and queueing systems},
  author={Neely, Michael},
  year={2010},
  publisher={Morgan \& Claypool Publishers}
}

@article{arjona19neurips,
  title={Rudder: Return decomposition for delayed rewards},
  author={Arjona-Medina, Jose A and others},
  journal={Proc. Adv. Neural Inf. Process. Syst. (NeurIPS)},
  volume={32},
  year={2019}
}

@article{chen21tcom,
  title={RAN information-assisted TCP congestion control using deep reinforcement learning with reward redistribution},
  author={Chen, Minghao and others},
  journal={IEEE Trans. Commun.},
  volume={70},
  number={1},
  pages={215--230},
  year={2021},
  publisher={IEEE}
}

@ARTICLE{gui26tnse,
  author={Gui, Jinsong and Li, Zhengyang and Zhang, Jingjing and Deng, Xiaoheng and Min, Geyong},
  journal={IEEE Trans. Netw. Sci. Eng.},
  title={Multi-Heterogeneous-Agent DRL for Efficient Congestion Control With Reward Redistribution in Space–Air–Ground Integrated Networks},
  year={2026},
  volume={13},
  number={},
  pages={3035-3052}
}

@article{schulman17arxiv,
  title={Proximal policy optimization algorithms},
  author={Schulman, John and Wolski, Filip and Dhariwal, Prafulla and Radford, Alec and Klimov, Oleg},
  journal={arXiv preprint arXiv:1707.06347},
  year={2017}
}

@inproceedings{zheng24arxiv,
  title={Lmsys-chat-1m: A large-scale real-world llm conversation dataset},
  author={Zheng, Lianmin and others},
  booktitle={Proc. Int. Conf. Learn. Representations (ICLR)},
  volume={2024},
  pages={22225--22257},
  year={2024}
}

@ARTICLE{zhang25iotj,
  author={Zhang, Mingjin and Shen, Xiaoming and Cao, Jiannong and Cui, Zeyang and Jiang, Shan},
  journal={IEEE Internet Things J.},
  title={EdgeShard: Efficient LLM Inference via Collaborative Edge Computing},
  year={2025},
  volume={12},
  number={10},
  pages={13119-13131}
}

@inproceedings{younesi25ucc,
  title={Splitwise: Collaborative edge--cloud inference for LLMs via Lyapunov-assisted DRL},
  author={Younesi, Abolfazl and others},
  booktitle={Proc. IEEE/ACM Int. Conf. Utility Cloud Comput.},
  pages={1--11},
  year={2025}
}

@INPROCEEDINGS{qiu25fcn,
  author={Qiu, Ningyuan and others},
  booktitle={Proc. Int. Conf. Future Commun. Netw. (FCN)},
  title={Joint Request Batching and Worker Assignment in AI-RAN Edge Inference Systems},
  year={2025},
  volume={},
  number={},
  pages={1-6}
}

@ARTICLE{jiang26jsac,
  author={Jiang, Feibo and others},
  journal={IEEE J. Sel. Areas Commun.},
  title={From Large AI Models to Agentic AI: A Tutorial on Future Intelligent Communications},
  year={2026},
  volume={44},
  number={},
  pages={3507-3540}
}

@ARTICLE{zhang26comst,
  author={Zhang, Ruichen and others},
  journal={IEEE Commun. Surv. Tutor.},
  title={Toward Edge General Intelligence With Agentic AI and Agentification: Concepts, Technologies, and Future Directions},
  year={2026},
  volume={28},
  number={},
  pages={4285-4318}
}

@ARTICLE{mou26tsc,
  author={Mou, Fangyi and Tang, Zhiqing and Jia, Weijia and Zhao, Wei},
  journal={IEEE Trans. Serv. Comput.},
  title={Adaptive Request Scheduling and Load Balancing for Edge Deployed Large Language Models},
  year={2026},
  volume={19},
  number={2},
  pages={934-947}
}

@INPROCEEDINGS{ma26infocom,
  author={Ma, Xuehao and Zhou, Huan and Wu, Tong and Fan, Xinggang},
  booktitle={IEEE Conf. Comput. Commun. (IEEE INFOCOM)},
  title={Preference-Aware Task Routing for Edge-Cloud Hierarchical Large Language Model Inference},
  year={2026},
  volume={},
  number={},
  pages={2985-2986}
}

@INPROCEEDINGS{jin26infocom,
  author={Jin, Hengli and Li, Shuo and Gregory, Mark A},
  booktitle={Proc. IEEE Conf. Comput. Commun. (IEEE INFOCOM)},
  title={GenSched: Phase-Aware Generative Scheduling for LLM Inference in Heterogeneous Edge Networks},
  year={2026},
  volume={},
  number={},
  pages={1-6}
}

@INPROCEEDINGS{mek25icc,
  author={Mekrache, Abdelkader and Ksentini, Adlen and Verikoukis, Christos},
  booktitle={Proc. IEEE Int. Conf. Commun. (ICC)},
  title={DRL-Enabled SLO-Aware Task Scheduling for Large Language Models in 6G Networks},
  year={2025},
  volume={},
  number={},
  pages={813-818}
}

@ARTICLE{li25tsc,
  author={Li, Yandi and others},
  journal={IEEE Trans. Serv. Comput.},
  title={Cloud-Edge System for Scheduling Unpredictable LLM Requests With Combinatorial Bandit},
  year={2025},
  volume={18},
  number={6},
  pages={3567-3580}
}

@misc{li25arxiv,
  title={LLM Bandit: Cost-Efficient LLM Generation via Preference-Conditioned Dynamic Routing},
  author={Yang Li},
  year={2025},
  eprint={2502.02743},
  archiveprefix={arXiv},
  primaryclass={cs.LG},
  url={https://arxiv.org/abs/2502.02743}
}

@ARTICLE{zhu26twc,
  author={Zhu, Bingjie and Chen, Zhixiong and Zhao, Liqiang and Shin, Hyundong and Nallanathan, Arumugam},
  journal={IEEE Trans. Wireless Commun.},
  title={Enabling Efficient Large Language Model Inference Over Wireless Networks With Caching},
  year={2026},
  volume={25},
  number={},
  pages={18326-18343}
}

@article{tang26arxiv,
  title={GELATO: Generative Entropy-and Lyapunov-based Adaptive Token Offloading for Device-Edge Speculative LLM Inference},
  author={Tang, Zengzipeng and Sun, Yuxuan and Chen, Wei and Ding, Jianwen and Ai, Bo},
  journal={arXiv preprint arXiv:2605.10124},
  year={2026}
}

@ARTICLE{he24tmc,
  author={He, Ying and Fang, Jingcheng and Yu, F. Richard and Leung, Victor C.},
  journal={IEEE Trans. Mobile Comput.},
  title={Large Language Models (LLMs) Inference Offloading and Resource Allocation in Cloud-Edge Computing: An Active Inference Approach},
  year={2024},
  volume={23},
  number={12},
  pages={11253-11264}
}

@INPROCEEDINGS{li24iccc,
  author={Li, Jinrong and Han, Biao and Li, Sudan and Wang, Xiaoyan and Li, Jie},
  booktitle={Proc. IEEE/CIC Int. Conf. Commun. China (ICCC)},
  title={CoLLM: A Collaborative LLM Inference Framework for Resource-Constrained Devices},
  year={2024},
  volume={},
  number={},
  pages={185-190}
}

@inproceedings{sun24osdi,
  title={Llumnix: Dynamic scheduling for large language model serving},
  author={Sun, Biao and others},
  booktitle={Proc. 18th USENIX Symp. Oper. Syst. Design Implementation (OSDI)},
  pages={173--191},
  year={2024}
}

@article{chen26arxiv,
  title={A universal load balancing principle and its application to large language model serving},
  author={Chen, Zixi and others},
  journal={arXiv preprint arXiv:2601.17855},
  year={2026}
}

@ARTICLE{yi23tmc,
  author={Yi, Changyan and others},
  journal={IEEE Trans. Mobile Comput.},
  title={Workload Re-Allocation for Edge Computing With Server Collaboration: A Cooperative Queueing Game Approach},
  year={2023},
  volume={22},
  number={5},
  pages={3095-3111}
}

@ARTICLE{zhao26tccn2,
  author={Zhao, Changyuan and others},
  journal={IEEE Trans. Cogn. Commun. Netw.},
  title={Edge General Intelligence Through World Models, Large Language Models, and Agentic AI: Fundamentals, Solutions, and Challenges},
  year={2026},
  volume={12},
  number={},
  pages={5649-5675},
}

\end{document}